\newtheorem{theorem}{Theorem}
\newcolumntype{L}[1]{>{\raggedright\let\newline\\\arraybackslash\hspace{0pt}}m{#1}}
\newcolumntype{C}[1]{>{\centering\let\newline\\\arraybackslash\hspace{0pt}}m{#1}}
\newcolumntype{R}[1]{>{\raggedleft\let\newline\\\arraybackslash\hspace{0pt}}m{#1}}
\let\MYcaption\@makecaption
\let\@makecaption\MYcaption
\let\oldgls\gls
\let\oldglspl\glspl
\newcommand\fussy@ifnextchar[3]{%
	\let\reserved@d=#1%
	\def\reserved@a{#2}%
	\def\reserved@b{#3}%
	\futurelet\@let@token\fussy@ifnch}
\def\fussy@ifnch{%
	\ifx\@let@token\reserved@d
		\let\reserved@c\reserved@a
	\else
		\let\reserved@c\reserved@b
	\fi
	\reserved@c}
\renewcommand{\gls}[1]{%
\oldgls{#1}\fussy@ifnextchar.{\@checkperiod}{\@}}
\renewcommand{\glspl}[1]{%
\oldglspl{#1}\fussy@ifnextchar.{\@checkperiod}{\@}}
\newcommand{\@checkperiod}[1]{%
	\ifnum\sfcode`\.=\spacefactor\else#1\fi
}
\newacronym{wrt}{w.r.t.}{with respect to}
\newacronym{RHS}{R.H.S.}{right-hand side}
\newacronym{LHS}{L.H.S.}{left-hand side}
\newacronym{iid}{i.i.d.}{independent and identically distributed}
\newacronym{SOTA}{SOTA}{state-of-the-art}
\let\saved@bibitem\@bibitem\makeatother
\let\@bibitem\saved@bibitem\makeatother
\crefname{equation}{}{}
\Crefname{equation}{}{}
\crefname{claim}{claim}{claims}
\crefname{step}{step}{steps}
\crefname{line}{line}{lines}
\crefname{condition}{condition}{conditions}
\crefname{dmath}{}{}
\crefname{dseries}{}{}
\crefname{dgroup}{}{}
\crefname{page}{page}{pages}
\crefname{Problem}{Problem}{Problems}
\crefname{Theorem}{Theorem}{Theorems}
\crefname{Corollary}{Corollary}{Corollaries}
\crefname{Proposition}{Proposition}{Propositions}
\crefname{Lemma}{Lemma}{Lemmas}
\crefname{Definition}{Definition}{Definitions}
\crefname{Example}{Example}{Examples}
\crefname{Assumption}{Assumption}{Assumptions}
\crefname{Remark}{Remark}{Remarks}
\crefname{Rem}{Remark}{Remarks}
\crefname{remarks}{Remarks}{Remarks}
\crefname{Appendix}{Appendix}{Appendices}
\crefname{Supplement}{Supplement}{Supplements}
\crefname{Exercise}{Exercise}{Exercises}
\crefname{TheoremA}{Theorem}{Theorems}
\crefname{CorollaryA}{Corollary}{Corollaries}
\crefname{PropositionA}{Proposition}{Propositions}
\crefname{LemmaA}{Lemma}{Lemmas}
\crefname{DefinitionA}{Definition}{Definitions}
\crefname{ExampleA}{Example}{Examples}
\crefname{RemarkA}{Remark}{Remarks}
\crefname{AssumptionA}{Assumption}{Assumptions}
\crefname{ExerciseA}{Exercise}{Exercises}
\crefname{algorithm}{Algorithm}{Algorithms}
\crefname{figure}{Fig.}{Figs.}
\crefname{table}{Table}{Tables}
\crefname{section}{Section}{Sections}
\crefname{subsection}{Section}{Sections}
\crefname{subsubsection}{Section}{Sections}
\def\cleartheorem#1{%
    \expandafter\let\csname#1\endcsname\relax
    \expandafter\let\csname c@#1\endcsname\relax
}
\def\clearthms#1{ \@for\tname:=#1\do{\cleartheorem\tname} }
		\newtheorem{Theorem}{Theorem}
		\newtheorem{Corollary}{Corollary}
		\newtheorem{Proposition}{Proposition}
		\newtheorem{Lemma}{Lemma}
\theoremstyle{remark}
\theoremstyle{plain}
\newcommand{\qednew}{\nobreak \ifvmode \relax \else
		\ifdim\lastskip<1.5em \hskip-\lastskip
			\hskip1.5em plus0em minus0.5em \fi \nobreak
		\vrule height0.75em width0.5em depth0.25em\fi}
\newcommand{\nn}{\nonumber\\ }
\NewDocumentCommand{\movedownsub}{e{^_}}{%
	\IfNoValueTF{#1}{%
		\IfNoValueF{#2}{^{}}
	}{%
		^{#1}
	}%
	\IfNoValueF{#2}{_{#2}}
}
\newcommand{\calE}{\mathcal{E}}
\newcommand{\calF}{\mathcal{F}}
\newcommand{\calL}{\mathcal{L}}
\newcommand{\calM}{\mathcal{M}}
\newcommand{\calN}{\mathcal{N}}
\newcommand{\ba}{\mathbf{a}}
\newcommand{\bA}{\mathbf{A}}
\newcommand{\bD}{\mathbf{D}}
\newcommand{\bI}{\mathbf{I}}
\newcommand{\bJ}{\mathbf{J}}
\newcommand{\bv}{\mathbf{v}}
\newcommand{\bV}{\mathbf{V}}
\newcommand{\bW}{\mathbf{W}}
\newcommand{\bx}{\mathbf{x}}
\newcommand{\bX}{\mathbf{X}}
\DeclareSymbolFont{bsfletters}{OT1}{cmss}{bx}{n}
\DeclareSymbolFont{ssfletters}{OT1}{cmss}{m}{n}
\DeclareMathSymbol{\bsfGamma}{0}{bsfletters}{'000}
\DeclareMathSymbol{\ssfGamma}{0}{ssfletters}{'000}
\DeclareMathSymbol{\bsfDelta}{0}{bsfletters}{'001}
\DeclareMathSymbol{\ssfDelta}{0}{ssfletters}{'001}
\DeclareMathSymbol{\bsfTheta}{0}{bsfletters}{'002}
\DeclareMathSymbol{\ssfTheta}{0}{ssfletters}{'002}
\DeclareMathSymbol{\bsfLambda}{0}{bsfletters}{'003}
\DeclareMathSymbol{\ssfLambda}{0}{ssfletters}{'003}
\DeclareMathSymbol{\bsfXi}{0}{bsfletters}{'004}
\DeclareMathSymbol{\ssfXi}{0}{ssfletters}{'004}
\DeclareMathSymbol{\bsfPi}{0}{bsfletters}{'005}
\DeclareMathSymbol{\ssfPi}{0}{ssfletters}{'005}
\DeclareMathSymbol{\bsfSigma}{0}{bsfletters}{'006}
\DeclareMathSymbol{\ssfSigma}{0}{ssfletters}{'006}
\DeclareMathSymbol{\bsfUpsilon}{0}{bsfletters}{'007}
\DeclareMathSymbol{\ssfUpsilon}{0}{ssfletters}{'007}
\DeclareMathSymbol{\bsfPhi}{0}{bsfletters}{'010}
\DeclareMathSymbol{\ssfPhi}{0}{ssfletters}{'010}
\DeclareMathSymbol{\bsfPsi}{0}{bsfletters}{'011}
\DeclareMathSymbol{\ssfPsi}{0}{ssfletters}{'011}
\DeclareMathSymbol{\bsfOmega}{0}{bsfletters}{'012}
\DeclareMathSymbol{\ssfOmega}{0}{ssfletters}{'012}
\newcommand{\btheta}{\bm{\theta}}
\newcommand{\bomega}{\bm{\omega}}
\newcommand{\bOmega}{\bm{\Omega}}
\newcommand*\rel@kern[1]{\kern#1\dimexpr\macc@kerna}
\newcommand*\widebar[1]{%
  \begingroup
  \def\mathaccent##1##2{%
    \rel@kern{0.8}%
    \overline{\rel@kern{-0.8}\macc@nucleus\rel@kern{0.2}}%
    \rel@kern{-0.2}%
  }%
  \macc@depth\@ne
  \let\math@bgroup\@empty \let\math@egroup\macc@set@skewchar
  \mathsurround\z@ \frozen@everymath{\mathgroup\macc@group\relax}%
  \macc@set@skewchar\relax
  \let\mathaccentV\macc@nested@a
  \macc@nested@a\relax111{#1}%
  \endgroup
}
\DeclareMathOperator*{\argmin}{arg\,min}
\DeclareMathOperator{\var}{var}
\DeclareMathOperator{\cov}{cov}
\newcommand{\ifbcdot}[1]{\ifblank{#1}{\cdot}{#1}}
\DeclarePairedDelimiterX\abs[1]{\lvert}{\rvert}{\ifbcdot{#1}}
\DeclarePairedDelimiterX\parens[1]{(}{)}{\ifbcdot{#1}}
\DeclarePairedDelimiterX\brk[1]{[}{]}{\ifbcdot{#1}}
\DeclarePairedDelimiterX\braces[1]{\{}{\}}{\ifbcdot{#1}}
\DeclarePairedDelimiterX\angles[1]{\langle}{\rangle}{\ifblank{#1}{\cdot,\cdot}{#1}}
\DeclarePairedDelimiterX\ip[2]{\langle}{\rangle}{\ifbcdot{#1},\ifbcdot{#2}}
\DeclarePairedDelimiterX\norm[1]{\lVert}{\rVert}{\ifbcdot{#1}}
\DeclarePairedDelimiterX\ceil[1]{\lceil}{\rceil}{\ifbcdot{#1}}
\DeclarePairedDelimiterX\floor[1]{\lfloor}{\rfloor}{\ifbcdot{#1}}
\DeclareFontFamily{U}{matha}{\hyphenchar\font45}
\DeclareFontShape{U}{matha}{m}{n}{
      <5> <6> <7> <8> <9> <10> gen * matha
      <10.95> matha10 <12> <14.4> <17.28> <20.74> <24.88> matha12
      }{}
\DeclareSymbolFont{matha}{U}{matha}{m}{n}
\DeclareFontFamily{U}{mathx}{\hyphenchar\font45}
\DeclareFontShape{U}{mathx}{m}{n}{
      <5> <6> <7> <8> <9> <10>
      <10.95> <12> <14.4> <17.28> <20.74> <24.88>
      mathx10
      }{}
\DeclareSymbolFont{mathx}{U}{mathx}{m}{n}
\DeclareMathDelimiter{\vvvert}{0}{matha}{"7E}{mathx}{"17}
\DeclarePairedDelimiterX\vertiii[1]{\vvvert}{\vvvert}{\ifbcdot{#1}}
\DeclarePairedDelimiterXPP\trace[1]{\operatorname{Tr}}{(}{)}{}{\ifbcdot{#1}} 
\DeclarePairedDelimiterXPP\col[1]{\operatorname{col}}{\{}{\}}{}{\ifbcdot{#1}} 
\DeclarePairedDelimiterXPP\row[1]{\operatorname{row}}{\{}{\}}{}{\ifbcdot{#1}} 
\DeclarePairedDelimiterXPP\erf[1]{\operatorname{erf}}{(}{)}{}{\ifbcdot{#1}}
\DeclarePairedDelimiterXPP\erfc[1]{\operatorname{erfc}}{(}{)}{}{\ifbcdot{#1}}
\DeclarePairedDelimiterXPP\KLD[2]{D}{(}{)}{}{\ifbcdot{#1}\, \delimsize\|\, \ifbcdot{#2}} 
\DeclarePairedDelimiterXPP\op[2]{\operatorname{#1}}{(}{)}{}{#2} 
\newcommand{\T}{^{\mkern-1.5mu\mathop\intercal}}
\DeclarePairedDelimiterXPP\indicate[1]{{\bf 1}}{\{}{\}}{}{\ifbcdot{#1}}
\newcommand{\tc}[1]{^{(#1)}}
\NewDocumentCommand\ofrac{s m}{%
	\IfBooleanTF#1%
	{\dfrac{1}{#2}}%
	{\frac{1}{#2}}%
}
\NewDocumentCommand\ddfrac{s m m}{%
	\IfBooleanTF#1%
	{\dfrac{\mathrm{d} {#2}}{\mathrm{d} {#3}}}%
	{\frac{\mathrm{d} {#2}}{\mathrm{d} {#3}}}%
}
\NewDocumentCommand\ppfrac{s m m}{%
	\IfBooleanTF#1%
	{\dfrac{\partial {#2}}{\partial {#3}}}%
	{\frac{\partial {#2}}{\partial {#3}}}%
}
\newcommand{\setgiven}{:}
\providecommand\given{}
\DeclarePairedDelimiterX\Set[2]\{\}{%
	\if#1:%
		\renewcommand\given{\SetSymbol{:}}%
	\else%
		\renewcommand\given{\SetSymbol[\delimsize]{#1}}%
	\fi%
#2
}
\NewDocumentCommand\set{s O{\setgiven} m}{%
	\IfBooleanTF#1%
	{\Set*{#2}{#3}}%
	{\Set{#2}{#3}}%
}
\providecommand\given{}
\DeclarePairedDelimiterXPP\cprob[1]{}(){}{
\renewcommand\given{\nonscript\,\delimsize\vert\allowbreak\nonscript\,\mathopen{}}%
\DeclarePairedDelimiterXPP\cexp[1]{}[]{}{
\renewcommand\given{\nonscript\,\delimsize\vert\allowbreak\nonscript\,\mathopen{}}%
#1%
}
\DeclareDocumentCommand \P { s e{_^} d() g } {%
	\mathbb{P}%
	\IfBooleanTF{#1}%
		{
			\IfValueT{#2}{_{#2}}%
			\IfValueT{#3}{^{#3}}%
			\IfValueTF{#5}{\cprob{#4 \given #5}}{\IfValueT{#4}{\cprob{#4}}}%
		}%
		{
			\IfValueT{#2}{_{#2}}%
			\IfValueT{#3}{^{#3}}%
			\IfValueTF{#5}{\cprob*{#4 \given #5}}{\IfValueT{#4}{\cprob*{#4}}}%
		}%
}
\DeclareDocumentCommand \E { s e{_^} o g } {%
	\mathbb{E}%
	\IfBooleanTF{#1}%
		{
			\IfValueT{#2}{_{#2}}%
			\IfValueT{#3}{^{#3}}%
			\IfValueTF{#5}{\cexp{#4 \given #5}}{\IfValueT{#4}{\cexp{#4}}}%
		}%
		{
			\IfValueT{#2}{_{#2}}%
			\IfValueT{#3}{^{#3}}%
			\IfValueTF{#5}{\cexp*{#4 \given #5}}{\IfValueT{#4}{\cexp*{#4}}}%
		}%
}
\DeclareDocumentCommand \Var { s e{_^} d() g } {%
	\var%
	\IfBooleanTF{#1}%
		{
			\IfValueT{#2}{_{#2}}%
			\IfValueT{#3}{^{#3}}%
			\IfValueTF{#5}{\cprob{#4 \given #5}}{\IfValueT{#4}{\cprob{#4}}}%
		}%
		{
			\IfValueT{#2}{_{#2}}%
			\IfValueT{#3}{^{#3}}%
			\IfValueTF{#5}{\cprob*{#4 \given #5}}{\IfValueT{#4}{\cprob*{#4}}}%
		}%
}
\DeclareDocumentCommand \Cov { s e{_^} d() g } {%
	\cov%
	\IfBooleanTF{#1}%
		{
			\IfValueT{#2}{_{#2}}%
			\IfValueT{#3}{^{#3}}%
			\IfValueTF{#5}{\cprob{#4 \given #5}}{\IfValueT{#4}{\cprob{#4}}}%
		}%
		{
			\IfValueT{#2}{_{#2}}%
			\IfValueT{#3}{^{#3}}%
			\IfValueTF{#5}{\cprob*{#4 \given #5}}{\IfValueT{#4}{\cprob*{#4}}}%
		}%
}
\NewDocumentCommand {\cbrace} {t+ D[]{black} D(){\widthof{#5}} m m } {%
	\begingroup%
		\color{#2}
		\IfBooleanTF{#1}{%
			\overbrace{#4}^%
		}{
			\underbrace{#4}_%
		}%
		{\parbox[c]{#3}{\centering\footnotesize{#5}}}%
	\endgroup%
}
\let\oldforall\forall
\renewcommand{\forall}{\oldforall \, }
\let\oldexist\exists
\renewcommand{\exists}{\oldexist \, }
\newcommand{\udcloser}[1]{\underline{\smash{#1}}}
\newcommand{\rankcolor}[2]{%
	\expandafter\renewcommand\csname #1\endcsname[1]{%
		\ifblank{##1}{%
			{\color{#2} \textbf{#2}}%
		}{%
			\ifmmode
				\textcolor{#2}{\bm{##1}}%
			\else%
				{\color{#2} \textbf{##1}}%
			\fi	
		}%
	}
}
\DeclareDocumentCommand{\includeCroppedPdf}{ o O{./Figures/} m }{
	\IfFileExists{#2#3-crop.pdf}{}{%
		\immediate\write18{pdfcrop #2#3.pdf #2#3-crop.pdf}}%
	\includegraphics[#1]{#2#3-crop.pdf}
}
\newcommand*{\addFileDependency}[1]{
  \typeout{(#1)}
  \@addtofilelist{#1}
  \IfFileExists{#1}{}{\typeout{No file #1.}}
}
\definecolor{gray90}{gray}{0.9}
\def\colorlist{red,blue,brown,cyan,darkgray,gray,lightgray,green,lime,magenta,olive,orange,pink,purple,teal,violet,white,yellow}
\def\startcomment{[}
	\newcommand{\createcolor}[1]{%
			\expandafter\newcommand\csname #1\endcsname[1]{{\color{#1} ##1}}%
	}
	\newcommand{\msout}[1]{\text{\color{green} \st{\ensuremath{#1}}}}
	\newcommand{\del}[1]{{\color{green}\ifmmode \msout{#1}\else\st{#1}\fi}}
	\newcommand{\createcolor}[1]{%
			\expandafter\newcommand\csname #1\endcsname[1]{%
				\noexpandarg%
				\StrChar{##1}{1}[\firstletter]%
				\if\firstletter\startcomment%
					\relax
				\else%
					##1
				\fi
			}%
	}
	\newcommand{\msout}[1]{}
	\newcommand{\del}[1]{}
\def\@tempa#1,{%
    \ifx\relax#1\relax\else
        \createcolor{#1}%
        \expandafter\@tempa
    \fi
}
\newcommand{\hhide}[1]{}
	\def\@testdef #1#2#3{%
		\def\reserved@a{#3}\expandafter \ifx \csname #1@#2\endcsname
			\reserved@a  \else
			\typeout{^^Jlabel #2 changed:^^J%
				\meaning\reserved@a^^J%
				\expandafter\meaning\csname #1@#2\endcsname^^J}%
			\@tempswatrue \fi}
\newcommand{\BibTeX}{B\kern-.05em{\sc i\kern-.025em b}\kern-.08em\TeX}
\newacronym{FL}{FL}{federated learning}
\newacronym{PFL}{PFL}{personalized federated learning}
\newacronym{GNN}{GNN}{graph neural network}
\newacronym{MLP}{MLP}{multi-layer perceptron}
\begin{document}


\begin{frontmatter}


\paperid{9084} 


\title{Personalized Subgraph Federated Learning \\
 with Sheaf Collaboration}


\author[A]{\fnms{Wenfei}~\snm{Liang}\thanks{Corresponding Author. Email: wenfei001@e.ntu.edu.sg.}}
\author[A]{\fnms{Yanan}~\snm{Zhao}}
\author[A]{\fnms{Rui}~\snm{She}} 
\author[A]{\fnms{Yiming}~\snm{Li}}
\author[A]{\fnms{Wee Peng}~\snm{Tay}}

\address[A]{School of Electrical and Electronic Engineering, Nanyang Technological University, Singapore}


\begin{abstract}
Graph-structured data is prevalent in many applications. In subgraph federated learning (FL), this data is distributed across clients, each with a local subgraph. Personalized subgraph FL aims to develop a customized model for each client to handle diverse data distributions. However, performance variation across clients remains a key issue due to the heterogeneity of local subgraphs.
To overcome the challenge, we propose \textit{FedSheafHN}, a novel framework built on a sheaf collaboration mechanism to unify enhanced client descriptors with efficient personalized model generation. Specifically, FedSheafHN embeds each client's local subgraph into a server-constructed collaboration graph by leveraging graph-level embeddings and employing sheaf diffusion within the collaboration graph to enrich client representations. Subsequently, FedSheafHN generates customized client models via a server-optimized hypernetwork. Empirical evaluations demonstrate that FedSheafHN outperforms existing personalized subgraph FL methods on various graph datasets. Additionally, it exhibits fast model convergence and effectively generalizes to new clients.
\end{abstract}
\end{frontmatter}
\section{Introduction}
Many \glspl{GNN} focus on a single graph \cite{she2023image,zhao2024distributed}, storing nodes and edges from various sources in a central server. Collaborative training of GNNs across distributed graphs can be achieved through \gls{FL} \cite{zhang2021subgraph,pillutla2022federated}, where each participant trains a local \gls{GNN}, and a central server aggregates their updated weights. In practice, clients may have unique subgraphs, and local data distributions can vary significantly. Some clients may even tackle distinct tasks. \Gls{PFL} \cite{smith2017federated} addresses this by allowing each client to use a personalized rather than a shared model.

This paper focuses on subgraph FL, a particularly challenging setting where \textbf{clients hold largely disjoint subsets of a global graph}.  
A key challenge is balancing the benefits of joint training with maintaining unique models tailored to the local data and task of each client. 
Recent subgraph FL methods \cite{zhang2021subgraph,baek2023personalized} attempt to address this by either addressing missing edges across subgraphs, e.g., by expanding subgraphs using information from others, or by identifying community structures to guide model aggregation.
However, as shown in Fig.~\ref{fig_client_accs}(a), \textbf{existing PFL methods suffer from significant performance variation across clients}, largely due to the heterogeneity of subgraphs from different parts of a global graph. 
We ask the following two questions:
\begin{enumerate}[align=left,label=\textbf{Q\arabic*}]
    \item\label{Q1} \textbf{How can we learn and exploit the underlying relationships among clients to support effective joint training?}
    \item\label{Q2} \textbf{How can we maintain personalized models when clients have distinct subgraphs and data distributions?}
\end{enumerate}

\begin{figure}[H]
\centering
\includegraphics[width=\columnwidth]{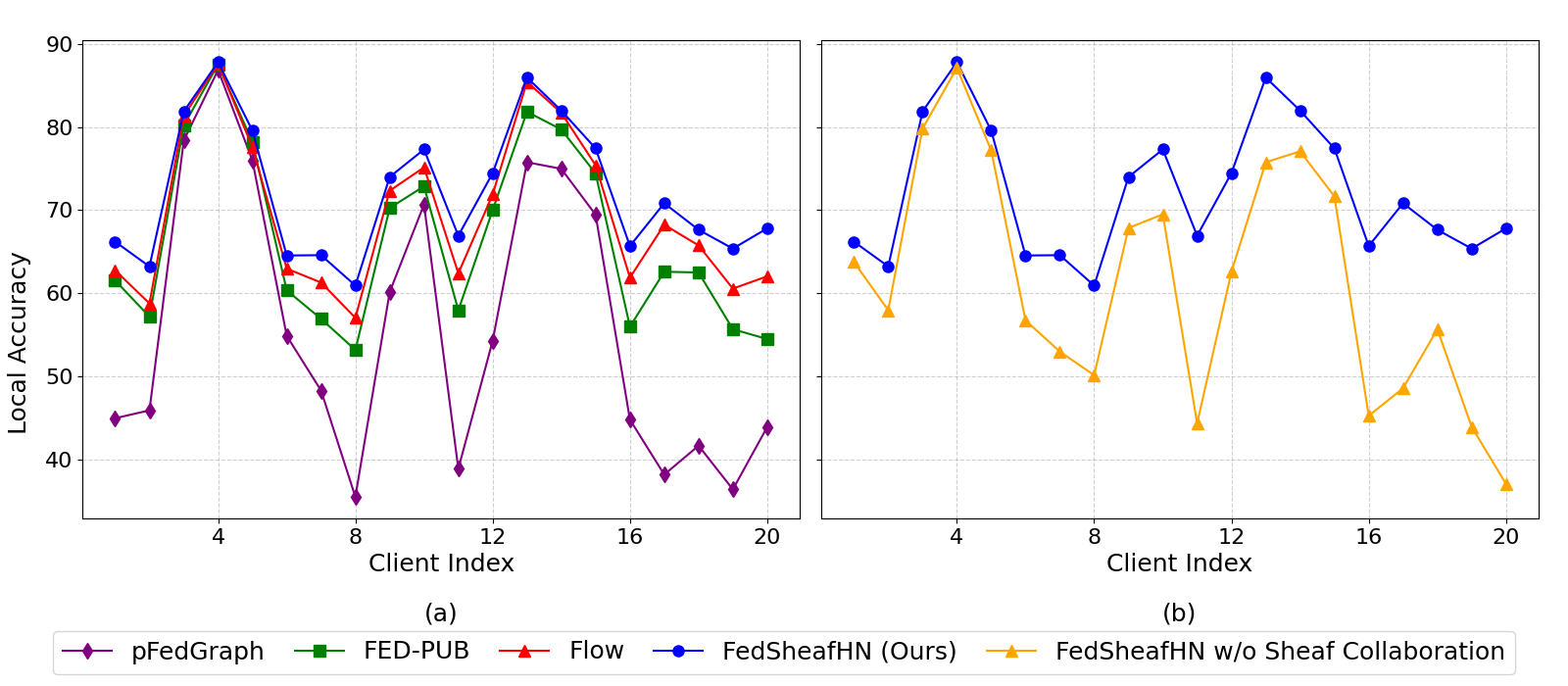}
\caption{Local accuracy comparison across 20 clients in non-overlapping scenario on the ogbn-arxiv dataset. (a) Comparison with PFL baselines: Flow, FED-PUB, and pFedGraph, demonstrating more stable and balanced performance among all clients for FedSheafHN. (b) Comparison shows FedSheafHN with sheaf collaboration (71.88$\pm$7.92) outperforms the variant without it (61.26$\pm$13.77), highlighting the effectiveness of sheaf collaboration. }
\label{fig_client_accs}
\end{figure}

To address \ref{Q1}, we propose sheaf collaboration, which enables effective cross-client collaboration by leveraging a server-side collaboration graph as a manifold to model underlying client relationships. This facilitates both information exchange and progressive refinement of client descriptors. 
To address \ref{Q2}, we employ Hypernetworks (HNs) to generate client-specific model parameters from descriptive vectors, enabling flexibility and diversity in personalized client models. The HNs map input embedding to model parameters, effectively constraining all personalized models to lie on a shared manifold in the parameter space \cite{shamsian2021personalized}. This ensures each client's model is adapted to its data while still maintaining a structured space for generalization.

We call our model \textit{Federated learning with Sheaf collaboration and HyperNetworks} (FedSheafHN). 
FedSheafHN facilitates in-depth cross-client information exchange and generates highly personalized models from enriched client representations. Rather than relying on single-scalar similarity or partial parameter sharing, sheaf collaboration leverages the underlying structure of the client relationship and captures rich, client-specific mappings that align local distributions without sacrificing individuality, as demonstrated in Fig.~\ref{fig_client_accs}(b).  
By equipping these enriched representations with HNs, FedSheafHN provides a promising approach to the core personalization challenges while producing models better aligned to heterogeneous demands, ultimately offering more stable and balanced performance for all clients.
Our main contributions are summarized as follows:
\begin{itemize}
\item We propose sheaf collaboration framework that leverages sheaf diffusion with server-constructed collaboration graph to capture the intricate inter-client relationships, thereby enabling more effective information aggregation across heterogeneous clients.

\item We develop an attention-based HNs and optimize it batch-wise on the server, efficiently generating personalized model parameters using enriched client descriptions produced by sheaf collaboration.

\item We provide a theoretical analysis of FedSheafHN, establishing both convergence and generalization guarantees.

\item We conduct extensive experiments on multiple graph-structured datasets under heterogeneous subgraph FL scenarios. FedSheafHN consistently outperforms state-of-the-art baselines in most cases, while converging quickly and generalizing effectively to newly joined clients.
\end{itemize}
\emph{Notations}: For a positive integer $N$, we let $[N] = \set{1, 2, \dots N}$. A matrix $\bA$ is written as $\bA = (\ba_i)_{i\in[N]}$ if its $i$-th row is the row vector $\ba_i$, for $i=1,2,\dots,N$. 
A vectorized version of $\bA$ is denoted as $\tilde{\bA}$, which is formed by stacking the transposes of the row vectors $\ba_i$ for $i = 1, 2, \dots, N$ into a single vector.
$(\cdot)\T$ denotes the transpose operator.
A graph $G$ is denoted as $G=(\calN,\calE)$ where $\calN$ is the vertex set and $\calE$ is the edge set, or $G=(\calN,\calE,\calF)$ if it is associated with a feature matrix $\calF$. $\bigoplus$ denotes the direct sum of vector spaces.
\section{Related work}\label{sect_relatedworks}
\subsection{Federated Learning}
\paragraph{PFL}
To address challenges related to data and device heterogeneity in FL, various PFL methods have been introduced, encompassing various approaches such as local fine-tuning \cite{arivazhagan2019federated,schneider2021personalization}, regularization for objective functions \cite{hanzely2020lower,yan2024personalized}, model mixing \cite{ma2022layer}, meta-learning \cite{jiang2019improving,lee2024fedl2p}, personalized parameter decomposition \cite{arivazhagan2019federated,panchal2024flow}, and differentially privacy \cite{agarwal2021skellam,li2024clients}.
Additionally, the concept of training multiple global models at the server has been explored for efficient PFL \cite{ghosh2020efficient,huang2021personalized}. This method involves training different models for distinct client groups, and clustering clients based on similarity.
Another strategy is to train individual client models collaboratively \cite{huang2021personalized,zhang2021parameterized,ye2023personalized,wang2023fedgs,scott2024pefll}.

\paragraph{Graph FL}
Recent research highlights the potential of integrating the FL framework into collaborative GNNs training, focusing on subgraph- and graph-level methods. Graph-level FL assumes that clients have disjoint graphs, such as molecular graphs. 
Studies \cite{xie2021federated,he2021spreadgnn,tan2023federated} delved into addressing heterogeneity in non-IID graphs with varying labels.
In contrast to graph-level FL, subgraph-level FL introduces unique challenges related to graph structures, such as missing links between subgraphs of a global graph. Methods like \cite{wu2021fedgnn,zhang2021subgraph} addressed this by augmenting nodes and connecting them, while \cite{baek2023personalized} explored subgraph communities as densely connected clusters of subgraphs to address the problem.
Unlike the aforementioned subgraph FL methods, our approach facilitates sheaf collaboration at the server to capture the underlying geometry and client interconnections, enabling effective representation for heterogeneous clients. This process can be considered as node-client two-level graph learning, improving the model's capacity to capture and utilize diverse information.

\subsection{Hypernetworks}
HNs \cite{klein2015dynamic} are deep neural networks that generate weights for a target network. The output weights are dynamically adjusted based on the input to the HNs \cite{klocek2019hypernetwork,navon2020learning}.
SMASH \cite{brock2017smash} extended HNs to Neural Architecture Search (NAS) by encoding architectures as 3D tensors using a memory channel scheme. In contrast, our approach encodes networks as computation graphs and employs GNNs.
GHN \cite{zhang2018graph} was introduced for anytime prediction tasks, optimizing both speed and the speed-accuracy trade-off curve. pFedHN \cite{shamsian2021personalized} leveraged HNs in FL with input task embeddings, while hFedGHN \cite{xu2023heterogeneous} extended GHN to train heterogeneous local models based on graph learning.
HNs inherently lend themselves to learn a diverse set of personalized models, and our approach equips them with an attention layer to further aggregate cross-client information, optimizing the HNs batch-wise to efficiently generate highly personalized model parameters from enriched client descriptions.

\section{Methodology} \label{sect_model}

\begin{figure*}[!t] 
\centering
\includegraphics[width=\textwidth]{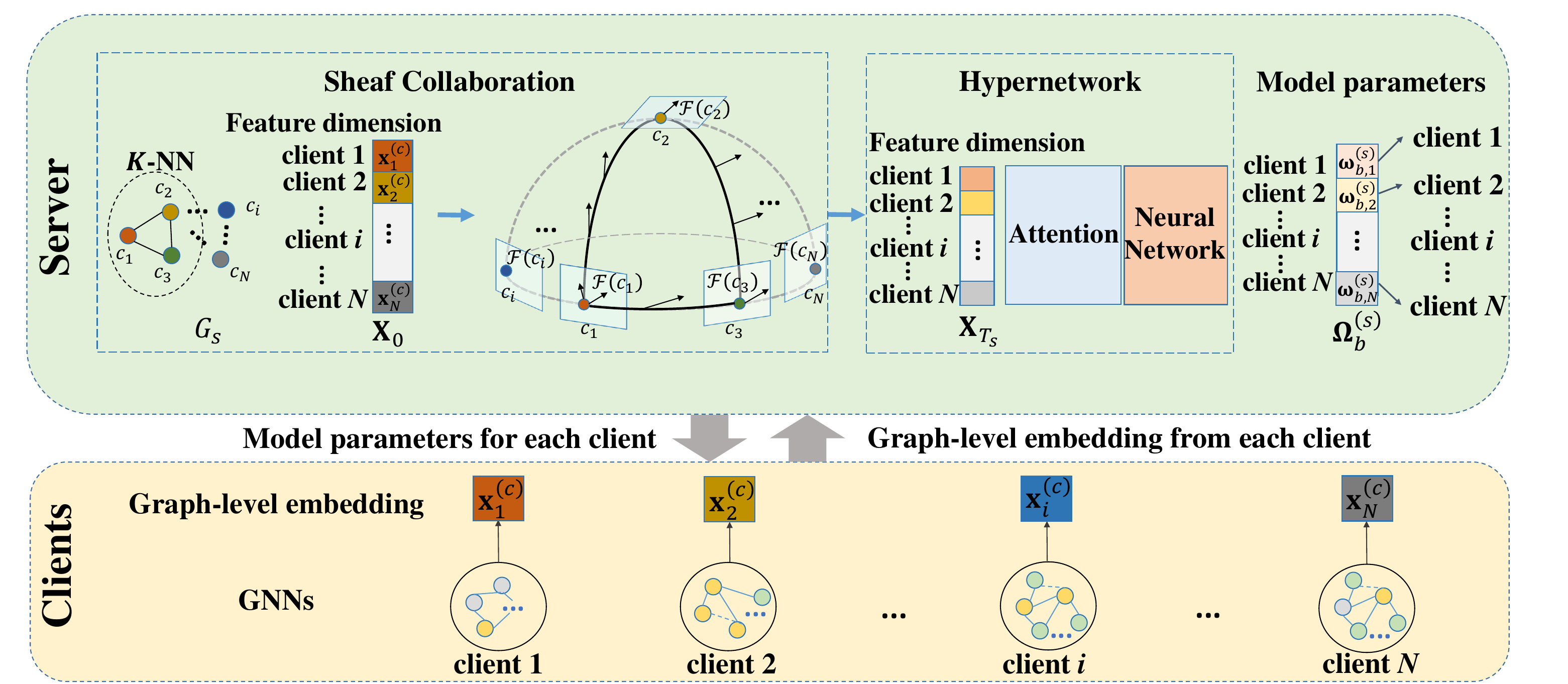} 
\caption{Overview of the FedSheafHN framework. Clients train GNNs to process local graph-structured data and generate graph-level embeddings. The server constructs a client collaboration graph from these embeddings to capture inter-client relationships. It updates the embeddings using sheaf diffusion, and then employs an attention-based hypernetwork to generate personalized model parameters based on the enriched client descriptions. }
\label{fig_wholeframework}
\vspace{2mm}
\end{figure*}

\subsection{The FedSheafHN Framework}
We consider a FL scenario with $N$ clients and a central server. Every client $i \in [N]$ possesses a local graph $G_{i}=(\calM_i, \calE_i, \bV_i)$, where $\calM_i$ is the node set, $\calE_i$ is the edge set, and $\bV_i$ is a feature matrix. Denote $\bV_{i} = (\bv_{i,m})_{m\in \calM_i}$, whose rows are the node feature vectors $\bv_{i,m}$ for each node $m\in \calM_{i}$.
Each client aims to learn a local model, denoted as $f_i(G_{i};\bomega_{i})$, tailored to its specific task. Here, $\bomega_{i}$ represents the model parameters for the client $i$. It is important to note that each local graph $G_{i}$ can potentially be a subgraph of a larger, undisclosed global graph. 

FedSheafHN aims to enhance the performance of the personalized model $f_i(G_{i};\bomega_{i})$ on each client through subgraph FL. We refer the reader to Fig.~\ref{fig_wholeframework} for an overview of the FedSheafHN pipeline. FedSheafHN constructs a collaboration graph $G_s$ on the server using the client graph-level embeddings transmitted from clients. The server generates personalized model parameters $\bomega_{b,i}\tc{s}$ for each client $i$ using the sheaf diffusion model $S(\cdot; \btheta)$ and hypernetwork $H(\cdot; \bm{\varphi})$. For ease of reference, we call $\bomega_{b,i}\tc{s}$ the \emph{backbone} parameter for client $i$. The server then distributes backbone parameters $(\bomega_{b,i}\tc{s})_{i\in [N]}$ to clients.

The model parameters $\bomega_{i} =(\bomega_{b,i}, \bomega_{h,i})$ of each client $i$ are composed of two parts: 
the backbone parameter $\bomega_{b,i}$ (initially trained locally and then updated by the server) and the head parameters $\bomega_{h,i}$ (trained locally). 
Each client $i$ initializes $\bomega_{b,i}=\bomega_{b,i}\tc{s}$ and $\bomega_{h,i}$ as the previously trained head parameters (with random initialization before the first FL round).
It then performs $T_c$ epochs of local training on local graph $G_{i}$ to update the model parameters $\bomega_{i}$. Let the model parameters after $T_c$ epochs of local training be $\bomega_{i}\tc{c}=(\bomega_{b,i}\tc{c},\bomega_{h,i}\tc{c})$. 
Each client $i$ transmits its incremental difference in its backbone parameter $\Delta \bomega_{b,i}:= \bomega_{b,i}\tc{c}-\bomega_{b,i}\tc{s}$ to the server. 
The server updates the sheaf diffusion model $S$ and hypernetwork $H$. This process is repeated for multiple communication rounds.

The objective of each client $i$ is to find  
\begin{align}\label{loss_funci}
\argmin_{\bomega_{i}} \mathcal{L}_{i}(\bomega_{i}),
\end{align}
where $\mathcal{L}_{i}(\cdot)$ denotes the task-specific loss of client $i$. The global training objective at the server is defined as
\begin{align}
&\argmin_{(\bomega_{b,i})_{i \in [N]}} \mathcal{L}((\bomega_{b,i})_{i \in [N]})
=\argmin_{(\bomega_{b,i})_{i \in [N]}}  \sum_{i=1}^{N} \mathcal{L}_{i} (\bomega_{b,i},\bomega_{h,i}\tc{c}) \nn
&\qquad\qquad= \argmin_{\btheta,\bm{\varphi}}  \sum_{i=1}^{N} \mathcal{L}_{i} (\btheta ,\bm{\varphi},\bomega_{h,i}\tc{c}), \label{loss_func}
\end{align}
where the last equality holds because $\bomega_{b,i}$ is function of sheaf diffusion parameters $\btheta$ and hypernetwork parameters $\bm{\varphi}$.\footnote{We abuse notations in Eq.~(\ref{loss_func}) by writing the same symbol $\calL_i$.}

In the following subsections, we provide detailed descriptions of each component of the FedSheafHN framework.

\subsection{Sheaf Collaboration}
\paragraph{Collaboration Graph Construction}
In subgraph FL, the server faces challenges in perceiving the overarching graph structure, complicating effective client collaboration. To address this, we formulate a novel collaboration graph leveraging client data-driven features to capture intrinsic geometric relationships. This graph facilitates client collaboration and enhances client-specific data representation.

Let $\bv_{i,m}^{(c)}$ be the updated embedding vector of node $m\in \calM_i$ of client $i$ after $T_c$ epochs local training of the client model $f_i(G_i;\bomega_{i})$. The graph-level embedding of client $i$ is computed locally as 
\begin{align}
\label{graph_embedding}
\bx_{i}\tc{c}=\frac{1}{\left | \calM_{i} \right |} \sum_{m\in \calM_{i}}  \bv_{i,m}^{(c)},
\end{align}
and transmitted to the server. It serves as the feature representation of client $i$. After receiving the graph-level embeddings from all clients, the server constructs a collaboration graph 
\begin{align}\label{Gs}
G_s=([N],\calE_s,\bX_0),
\end{align}
where $\calE_s$ is the edge set, and $\bX_0=(\bx_i\tc{c})_{i\in [N]}$ is the feature matrix consisting of graph-level embeddings of all clients stacked in row form. 
The edge set is formed using a $K$-nearest neighbors (KNN) approach based on a predefined distance (e.g., cosine similarity) between the graph-level embeddings.
To conserve communication resources, the client collaboration graph $G_s$ on the server is updated every $r_{in}$ communication rounds. Specifically, after $r_{in}$ communication rounds, we reset $\bX_0$ using the graph-level embeddings transmitted by the clients, i.e., $(\bx_i\tc{c})_{i \in [N]}$.

\paragraph{Graph Representation Learning} 
To uncover latent inter-client relationships and enrich client descriptions through information aggregation, our method incorporates elements of (cellular) sheaf theory. A cellular sheaf assigns a vector space to each node and edge in a graph, with linear maps for incident node-edge pairs \cite{curry2014sheaves,hansen2020sheaf,bodnar2022neural}. These vector spaces represent points on a manifold, with the sheaf Laplacian modeling discrete transport of elements via rotations in neighboring vector spaces, where the graph acts as the manifold. Inspired by this, our framework employs the server-side collaboration graph as a manifold to model client relationships.

Mathematically, a cellular sheaf $(G_s,\mathcal{F})$ on the collaboration graph $G_s$ includes three components:
\textit{(a)} a vector space $\mathcal{F}(n)$ for each $n \in [N]$ containing the graph-level embedding vector of client $n$, \textit{(b)} a vector space $\mathcal{F}(e)$ for each $e\in \calE_s$, and \textit{(c)} a linear map $\mathcal{F}_{n\unlhd e}:\mathcal{F}{(n)}\to \mathcal{F}{(e)}$ for each incident node-edge pair $n\unlhd e$. The map $\calF_{n\unlhd e}$ is also known as a restriction map, which transports an element from the node space to the edge space. Therefore, for $(n,m)\in\calE_s$, we can compute the difference between $\bx_n\in \calF(n)$ and $\bx_m\in\calF(m)$ via the restrictions $\calF_{n\unlhd e} \bx_n$ and $\calF_{m\unlhd e} \bx_m$ (both in $\calF((n,m))$), capturing the relationship between two nodes.

For $\bX = (\bx_n)_{n\in [N]}\in \bigoplus_{n\in[N]} \calF(n)$, the sheaf Laplacian of $(G_s,\mathcal{F})$ is a linear map that maps $\bX$ to $L_{\calF}(\bX)\in \bigoplus_{n\in[N]} \calF(n)$, whose node or client $n$ component is defined as
\begin{align}\label{eq.L_F_xn}
L_{\mathcal{F}}(\bX)_{n} := \sum_{n,m\unlhd e} \mathcal{F}_{n\unlhd e}\T (\mathcal{F}_{n\unlhd e}\bx_{n}- \mathcal{F}_{m\unlhd e}\bx_{m}). 
\end{align}
Let $\bD$ be the block-diagonal of $L_{\mathcal{F}}$. The normalized sheaf Laplacian is given by
\begin{align}
\Delta_{\mathcal{F}}: = \bD^{-1/2} L_{\mathcal{F}} \bD^{-1/2}. \label{eq.delta_F}
\end{align}
This operator can be used to describe the process of vector space elements being transported and rotated into neighboring vector spaces, capturing the essence of element transitions within the sheaf. 

We model the restriction map $\calF_{n\unlhd e}$ using a \gls{MLP}, which is updated with each server gradient update instance. We stack $T_s$ layers of the sheaf diffusion model to evolve the client embeddings as 
shown in Fig.~\ref{fig_NSD_network}. Denote the sheaf diffusion model at layer $t\in [T_s]$ as
\begin{align}
g(\cdot,\btheta _{t})= - \sigma \parens*{ \Delta_{\mathcal{F}^{(t)}} (\bI \otimes \bW_{1}^{(t)} ) (\cdot) \bW_{2}^{(t)} },
\end{align}
where $\btheta_t$ is the set of learnable parameters at layer $t$, including the learnable weight matrices $\bW_{1}^{(t)}$ and $\bW_{2}^{(t)}$ and the MLP weights for the restriction maps.

\begin{figure}[H]
\centering
\includegraphics[width=\columnwidth]{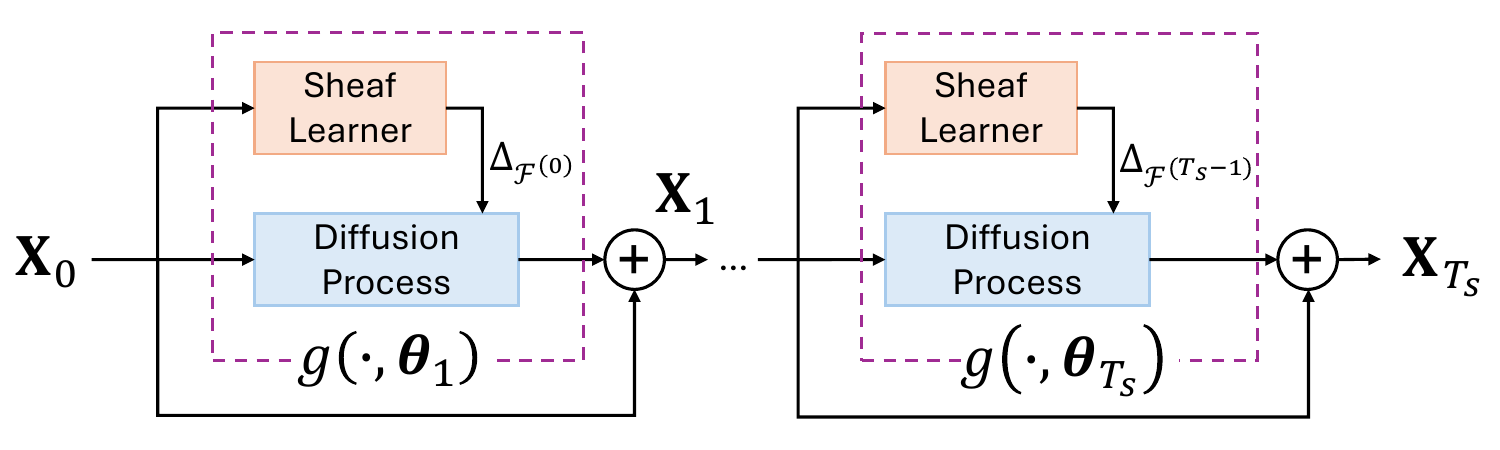}
\caption{Sheaf diffusion model pipeline $S(\cdot;\btheta)$. The evolution of the sheaf is described by a learnable function, enabling the model to utilize the latest available features to manipulate the underlying geometry of the graph.} 
\label{fig_NSD_network}
\end{figure}

The corresponding sheaf Laplacian is denoted as $\Delta_{\calF\tc{t}}$.
The server adopts a diffusion-type model \cite{bodnar2022neural} to update the client graph-level embeddings as follows: for $t=1,\dots,T_s$, 
\begin{align}\label{sheaf_diffusion}
\bX_{t} = \bX_{t-1} + g(\bX_{t-1};\btheta_{t-1}),
\end{align}
where $\sigma(\cdot)$ denotes a non-linear function, $\bI$ is the identity matrix, and $\otimes$ is the Kronecker product.

The iteration Eq.~(\ref{sheaf_diffusion}) is performed for $T_s$ times and its output is denoted as $S(\bX_{0}; \btheta)$, given by  
\begin{align}\label{X-Ts}
\bX_{T_s} := S(\bX_{0}; \btheta) = (\bx_i\tc{T_s})_{i\in [N]},
\end{align}
where $\btheta=(\btheta_t)_{t\in[T_s]}$. 

The collaboration graph $G_{s}$ evolves in tandem with client model training, with graph-level embeddings progressively refined to better represent client graph features. This process seamlessly integrates into client model training, requiring no additional models, and aligns the evolution of $G_{s}$ with the adaptation of client models.

\subsection{Optimization of the Hypernetwork}
To generate personalized model parameters for each client, we utilize HNs taking the updated client graph-level embeddings $\bX_{T_s}$ from Eq.~(\ref{X-Ts}) as input. The HNs is optimized to generate personalized backbone parameters for all clients using the collaboration graph feature matrix.

Unlike conventional HNs that use a basic \gls{MLP} \cite{zhang2018graph,shamsian2021personalized,xu2023heterogeneous} without explicit client relationships, using the embeddings from the neural sheaf diffusion model allows FedSheafHN to integrate potential latent relationships between the clients. 
To achieve this, the HNs denoted by $H(\cdot; \bm{\varphi})$ for our FedSheafHN incorporates an attention layer on top of a MLP, where the attention-based embedding is given by
\begin{align}\label{hn_att}
\bX_{T_s}^{\mathrm{att}}
& = f_{\mathrm{softmax}}\parens*{ (\bX_{T_s} \bA_{\mathrm{att}}^Q )(\bX_{T_s} \bA_{\mathrm{att}}^K )\T } \bX_{T_s} \bA_{\mathrm{att}}^V ,
\end{align}
$f_{\mathrm{softmax}}$ denotes the row-wise softmax function, and $\bA_{\mathrm{att}}^Q$, $\bA_{\mathrm{att}}^K$, and $\bA_{\mathrm{att}}^V$ are all learnable matrices. 
Then, $\bX_{T_s}^{\mathrm{att}}$ is fed into the \gls{MLP} to generate the backbone parameters. 

In summary, the output of the HNs is given by
\begin{align}\label{Omega-b-s}
\bOmega_b\tc{s} := H(\bX_{T_s}; \bm{\varphi}) = (\bomega_{b,i}\tc{s})_{i \in [N]},
\end{align}
where $\bomega_{b,i}\tc{s}$ denotes the backbone parameters of client $i$, and $\bm{\varphi}$ are the HNs parameters, including attention layer and MLP.

\subsection{Learning Procedure}

After receiving $\bomega_{b,i}\tc{s}$ from the server in the current communication round, each client $i$ performs $T_c$ rounds of local training to obtain the updated parameters $\bomega_i\tc{c} = (\bomega_{b,i}\tc{c},\bomega_{h,i}\tc{c})$. In each local training epoch, the model parameters $\bomega_i$ are updated using $\bomega_i := \bomega_i - \gamma \nabla_{\bomega_i}\mathcal{L}_i(\bomega_i)$, where $\gamma$ is a hyperparameter. Each client $i$ transmits $\Delta \bomega_{b,i}:= \bomega_{b,i}^{(c)}-\bomega_{b,i}^{(s)}$ to the server, forming $\Delta \bOmega_b=(\Delta \bomega_{b,i})_{i \in [N]}$. 

To simplify notations, we represent our quantities of interest in vector form. We use $\tilde{\btheta}$ and $\tilde{\bm{\varphi}}$ to represent the vectorized versions of $\btheta$ and $\bm{\varphi}$, respectively.
The vectorized version of $\bX_{T_s}$ in Eq.~(\ref{X-Ts}) is written as $\tilde{\bX}_{T_s} := S( \tilde{\bX}_{0}; \tilde{\btheta})$.
Similarly, the vectorized version of $\bOmega_b\tc{s}$ in Eq.~(\ref{Omega-b-s}) is given by $\tilde{\bOmega}_b\tc{s} := H(\tilde{\bX}_{T_s}; \tilde{\bm{\varphi}} )$.

Using Eq.~(\ref{loss_func}), we derive the gradients of FedSheafHN for the parameters $\tilde{\btheta}, \tilde{\bm{\varphi}}$ on the server via $\Delta \tilde{\bOmega}_b$ and the chain rule. 
From \cite{shamsian2021personalized}, we have the update rule for the server's parameters: 
\begin{align}
&\Delta \tilde{\btheta} =\mathbf{J}_{\tilde{\btheta}}(\tilde{\bOmega}_b\tc{s}) \Delta \tilde{\bOmega}_b \label{chain_rule_1} \\
&\Delta \tilde{\bm{\varphi}} = \mathbf{J}_{\tilde{\bm{\varphi}} }(\tilde{\bOmega}_b\tc{s}) \Delta \tilde{\bOmega}_b.\label{chain_rule_2} 
\end{align}
We also have
\begin{align}
\label{gd_s}
\mathbf{J}_{\tilde{\btheta}}{(\tilde{\bOmega}_b\tc{s})} =\mathbf{J}_{\tilde{\btheta}} (S(\tilde{\bX}_{0} ; \tilde{\btheta})) \mathbf{J}_{\tilde{\bX}_{T_s}}(H(\tilde{\bX}_{T_s};\tilde{\bm{\varphi}})), 
\end{align}
as well as 
\begin{align}
\label{gd_h}
\mathbf{J}_{\tilde{\bm{\varphi}} }{(\tilde{\bOmega}_b\tc{s})} = \mathbf{J}_{\tilde{\bm{\varphi}}}(H(\tilde{\bX}_{T_s};\tilde{\bm{\varphi}})),
\end{align}
where $\bJ_{\tilde{\btheta}}(\cdot)$ and $\bJ_{\tilde{\bm{\varphi}}}(\cdot)$ are the Jacobians \gls{wrt} $\tilde{\btheta}$ and $\bm{\tilde{\varphi}}$, respectively.
Throughout the training process, the parameters $\tilde{\btheta}$ of the model $S$ are updated according to Eq.~(\ref{chain_rule_1}) and Eq.~(\ref{gd_s}) using the gradient descent step $\tilde{\btheta} := \tilde{\btheta} - \alpha  \Delta \tilde{\btheta}$, where $\alpha$ is a gradient control hyperparameter.
The parameters $\tilde{\bm{\varphi}}$ of the model $H$ are updated according to Eq.~(\ref{chain_rule_2}) and Eq.~(\ref{gd_h}) using $\tilde{\bm{\varphi}} := \tilde{\bm{\varphi}} - \beta \Delta \tilde{\bm{\varphi}}$, where $\beta$ is a hyperparameter. 
The full procedure of FedSheafHN is outlined in Algorithm~\ref{alg_fedsheafhn} in Appendix~\ref{app_algorithm}. 

\subsection{Theoretical Basis}
\paragraph{Convergence Guarantees}
We provide guarantees by bounding the expected average gradient norm, with the detailed proof and complete formulation presented in Appendix~\ref{app_convergence-proofs}.
\begin{theorem}
\label{thm_convergence}
Under standard assumptions of smoothness and boundedness (cf.\ the Appendix), the optimization of FedSheafHN after $T$ communication round satisfies
\begin{align}
& \frac{1}{T}\sum_{t=1}^{T}\mathbb{E}\big\|\nabla \mathcal{L}(\omega_t)\big\|^2\le \frac{(\mathcal{L}(\omega_0)-\mathcal{L}^{\ast})}{\sqrt{NT}}+\frac{LL_1(16\sigma_1^2+8T_cL_G^2)}{T_c\sqrt{NT}} \nn
&\qquad\qquad\qquad\qquad\quad +\frac{16NL_7^2b_3^2b_2^2}{T}+\frac{(L+2)}{T_c}b_1^2 ,
\end{align}
where $\mathcal{L}$ denotes the global objective Eq.~(\ref{loss_func}) with lower bound $\mathcal{L}^\ast$. $\omega_0$ and $\omega_t$ are the initial and $t$-th communication round parameters. $L$, $L_1$, $L_7$ are smoothness constants. $b_1$, $b_2$, $b_3$ bound gradient norms. $\sigma_1$ bounds gradient variance, and $L_G$ measures client dissimilarity.
\end{theorem}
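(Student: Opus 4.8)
The plan is to run a non-convex descent argument adapted to the two-level server update, in which the effective gradient is routed through the Jacobians of the sheaf diffusion map $S$ and the hypernetwork $H$. Writing $\omega_t = (\tilde{\btheta}, \tilde{\bm{\varphi}})$ for the stacked server parameters at communication round $t$ and $g_t = (\alpha\Delta\tilde{\btheta}, \beta\Delta\tilde{\bm{\varphi}})$ for the update assembled from Eqs.~(\ref{chain_rule_1})--(\ref{gd_h}), I would begin from the $L$-smoothness of the global objective in Eq.~(\ref{loss_func}) and apply the descent lemma to one round:
\[
\mathcal{L}(\omega_{t+1}) \le \mathcal{L}(\omega_t) - \langle \nabla\mathcal{L}(\omega_t), g_t\rangle + \frac{L}{2}\norm{g_t}^2 .
\]
Taking conditional expectations, telescoping over $t = 1,\dots,T$, and dividing by $T$ produces the leading term $(\mathcal{L}(\omega_0)-\mathcal{L}^\ast)/T$; choosing the server step sizes $\alpha,\beta = \Theta(1/\sqrt{NT})$ then yields the $1/\sqrt{NT}$ scaling of the first two terms of the bound.

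The crux is relating $g_t$ to the true gradient $\nabla\mathcal{L}(\omega_t)$. By the chain rule in Eqs.~(\ref{gd_s})--(\ref{gd_h}), the true server gradient contracts the same Jacobians $\mathbf{J}_{\tilde{\btheta}}, \mathbf{J}_{\tilde{\bm{\varphi}}}$ with $\nabla_{\tilde{\bOmega}_b}\mathcal{L}$, whereas the server contracts them with the aggregated client increment $\Delta\tilde{\bOmega}_b$. I would therefore expand each client increment $\Delta\bomega_{b,i} = -\gamma\sum_{k=0}^{T_c-1}\nabla_{\bomega_{b,i}}\mathcal{L}_i(\bomega_i\tc{k})$ and compare it to the idealized increment $-\gamma T_c\,\nabla_{\bomega_{b,i}}\mathcal{L}_i(\bomega_i\tc{s})$ evaluated at the round's starting point. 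The discrepancy is the local client drift, which I would control with the per-client smoothness $L_1$ together with the stochastic-gradient variance $\sigma_1^2$ and the client-dissimilarity bound $L_G^2$; the standard bound on the deviation of each local iterate from its initialization over $T_c$ steps is exactly what generates the $16\sigma_1^2 + 8T_cL_G^2$ factor.

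The second obstacle, and the point where the sheaf/hypernetwork architecture specifically enters, is propagating this increment through the nonlinear maps $S$ and $H$. Using the boundedness of the Jacobian norms (constants $b_2, b_3$) and the smoothness $L_7$ of the composed map, I would bound the bias between $g_t$ and the scaled true gradient; summing this error over the $N$ clients and across rounds produces the $16NL_7^2b_3^2b_2^2/T$ term, while the residual first-order error of replacing the entire local trajectory by a single gradient leaves the non-vanishing $\tfrac{(L+2)}{T_c}b_1^2$ term that is driven down only by increasing the local epoch count $T_c$. Assembling the one-round inequality from these three contributions---leading descent, drift-plus-variance, and Jacobian propagation---and telescoping over the $T$ rounds gives the claimed bound.

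I expect this relation between $g_t$ and $\nabla\mathcal{L}$ to be the main hurdle: unlike plain FedAvg, the server never observes $\nabla_{\tilde{\bOmega}_b}\mathcal{L}$ directly, so I must verify that contracting the true Jacobians with the biased, stochastic client increment still yields a genuine descent direction, and that the nonlinear maps $S$ and $H$ do not amplify the accumulated drift---which is precisely what the boundedness assumptions on their Jacobians are designed to guarantee.
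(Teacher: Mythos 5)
Your overall route---descent lemma on the $L$-smooth composite objective, comparison of the aggregated client increment against the true gradient at the round's start (client drift controlled by smoothness, variance $\sigma_1^2$, and dissimilarity $L_G^2$), boundedness of the Jacobians of $S$ and $H$ to propagate that increment, then telescoping and step-size tuning---is essentially the paper's proof (its Lemmas~1--4 implement exactly these steps). However, there are two concrete gaps.

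First, you define the parameter vector as $\omega_t = (\tilde{\btheta}, \tilde{\bm{\varphi}})$, omitting the locally-trained head parameters $\bomega_{h,i}$. In the paper, $\omega_t = (\eta_t, (\bomega_{h,i,t})_{i\in[N]})$, and the gradient $\nabla\mathcal{L}(\omega_t)$ appearing in the theorem includes the head components. This matters because the non-vanishing floor $\tfrac{(L+2)}{T_c}b_1^2$ originates precisely from the heads: in the descent inner product, the $\eta$-part yields a negative term plus drift, while the head-part (updated locally, uncoordinated by the server) can only be \emph{bounded} via the gradient bound $b_1$, contributing $\beta b_1^2$ in the inner product and $\tfrac{L}{2}\beta b_1^2$ in the quadratic term; dividing by $\beta T_c$ makes the $\beta$ cancel, leaving a step-size-independent bias. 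Your attribution of this term to ``the residual first-order error of replacing the entire local trajectory by a single gradient'' is not tenable: that error is the client drift, which is already what generates the $L_7^2 b_3^2$ terms, and without the heads in your parameter vector there is no mechanism in your plan that produces a $b_1$-dependent quantity at all ($b_1$ bounds a head-related gradient in the paper's assumptions).

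Second, your step size $\alpha,\beta = \Theta(1/\sqrt{NT})$ does not yield the stated bound. The leading telescoped term is $\tfrac{2(\mathcal{L}(\omega_0)-\mathcal{L}^\ast)}{\beta T_c T}$, so with $\beta \propto 1/\sqrt{NT}$ it scales as $\sqrt{N}/(T_c\sqrt{T})$, which is $\Theta(N)$ worse than claimed. The paper instead sets $\beta = \tfrac{2\sqrt{N}}{T_c\sqrt{T}}$---\emph{growing} in $N$, reflecting the $1/N$ variance reduction from averaging the increments of $N$ clients---and it is this choice that simultaneously produces the $\tfrac{1}{\sqrt{NT}}$ leading terms, the $\tfrac{16N L_7^2 b_3^2 b_2^2}{T}$ drift term, and the $\tfrac{LL_1(16\sigma_1^2 + 8T_cL_G^2)}{T_c\sqrt{NT}}$ term. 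Both issues are repairable within your framework, but as written the plan cannot reproduce the theorem's right-hand side.
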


\paragraph{Generalization Guarantees}
From Eq.~(\ref{loss_func}), the expected loss is $\mathcal{L}$, and the empirical loss is denoted as $\hat{\mathcal{L}}$. All learnable parameters are assumed to be bound within a ball of radius $R$ and satisfy global Lipschitz conditions for all potential functions in $\mathcal{L}$. The parameter space dimensions for $\bx_i\tc{c}$, $\btheta$, $\bm{\varphi}$, and $\bomega_{h,i}$ are $d_{i}$, $d_{\btheta}$, $d_{\bm{\varphi}}$, and $d_{\bomega_{h,i}}$, respectively. 
$L$, $L_s$, and $L_H$ are constants related to Lipschitz conditions.
The proof is provided in Appendix~\ref{app_generalization-proofs}. 

\begin{theorem}
\label{thm_generalization_bound}
Suppose there are $N$ clients, and each client $i$ has a sample size larger than $M$, with $M=$
$\mathcal{O}\left(\big(d_{i}+\frac{d_{\btheta}+d_{\bm{\varphi}}+d_{\bomega_{h,i}}}{N}\big)\frac{1}{\xi^2}\log{(\frac{RL(L_s+L_H+1)}{\xi})
+\frac{1}{N\xi^2}\log{\frac{1}{\delta}}}\right)$,
it holds with probability at least $1-\delta$ that $\left |\mathcal{L}(\omega_i)-\hat{\mathcal{L}}(\omega_i)\right |\le\xi$.
\end{theorem}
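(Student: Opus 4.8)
The plan is to prove \Cref{thm_generalization_bound} by a covering-number uniform-convergence argument applied to the composed prediction pipeline, treating the sheaf-diffusion stage $S(\cdot;\btheta)$ and the hypernetwork stage $H(\cdot;\bm{\varphi})$ as Lipschitz maps of their (bounded) parameters. First I would write the population risk of client $i$ as a composition
\begin{align}
\mathcal{L}_i(\omega_i)=\mathbb{E}\,\ell_i\!\left(\,\text{head}_{\bomega_{h,i}}\!\circ H(\cdot;\bm{\varphi})\circ \big[S(\bX_0;\btheta)\big]_i\,\right),
\end{align}
so that the only free quantities entering $\mathcal{L}_i$ are the full embedding matrix $\bX_0=(\bx_j\tc{c})_{j\in[N]}$, the shared sheaf parameters $\btheta$, the shared hypernetwork parameters $\bm{\varphi}$, and the local head $\bomega_{h,i}$. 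By assumption all of these lie in a ball of radius $R$, so the effective parameter dimension governing client $i$'s risk is $D_i=\sum_{j}d_j+d_{\btheta}+d_{\bm{\varphi}}+d_{\bomega_{h,i}}$, which under the symmetric per-client accounting $\sum_j d_j=N d_i$ becomes $D_i=N d_i+d_{\btheta}+d_{\bm{\varphi}}+d_{\bomega_{h,i}}$. Note that only client $i$'s own head appears, which is why $d_{\bomega_{h,i}}$ (rather than the total head dimension) enters the count.

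The second and central step is to establish that $\omega_i\mapsto\mathcal{L}_i(\omega_i)$ is globally Lipschitz with constant of order $L(L_s+L_H+1)$. I would compose the three stages: the $T_s$-layer diffusion $S$ is $L_s$-Lipschitz in $(\bX_0,\btheta)$, obtained by propagating a per-layer Lipschitz bound through Eq.~(\ref{sheaf_diffusion}) using boundedness of the normalized Laplacian $\Delta_{\calF\tc{t}}$, of the weight matrices $\bW_1\tc{t},\bW_2\tc{t}$, and of the restriction-map MLPs on the $R$-ball; the hypernetwork $H$ (the attention block Eq.~(\ref{hn_att}) followed by an MLP) is $L_H$-Lipschitz, using that the softmax and the attention matrices are Lipschitz on bounded domains; and the task loss is $L$-Lipschitz. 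The additive ``$+1$'' absorbs the residual/identity paths in the diffusion updates. I expect \emph{this} to be the main obstacle: the normalized sheaf Laplacian $\Delta_{\calF\tc{t}}$ depends nonlinearly on the parameters through the MLP restriction maps and through the $\bD^{-1/2}$ normalization, so controlling its Lipschitz dependence across the $T_s$ layers (and through the attention softmax) is where the boundedness hypotheses must be used most carefully.

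With the Lipschitz constant in hand, I would build a $(3R/\epsilon)^{D_i}$-cover of the parameter ball, so $\log\mathcal{N}\lesssim D_i\log(R/\epsilon)$. For each fixed cover point I apply a Hoeffding/Bernstein bound on $|\mathcal{L}_i-\hat{\mathcal{L}}_i|$ over the pooled dataset of size $NM$ (the shared embeddings and shared parameters are estimated from all clients' data, which supplies the federated $1/N$ factor), take a union bound over the cover, and transfer from cover points to arbitrary $\omega_i$ by choosing $\epsilon\asymp \xi/\big(L(L_s+L_H+1)\big)$ so the discretization error is at most $\xi/2$. The failure condition $(3R/\epsilon)^{D_i}\cdot 2e^{-cNM\xi^2}\le\delta$ rearranges to
\begin{align}
NM\xi^2\;\gtrsim\; D_i\,\log\!\frac{RL(L_s+L_H+1)}{\xi}+\log\frac{1}{\delta}.
\end{align}

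Finally I would divide through by $N$ and substitute $D_i=N d_i+d_{\btheta}+d_{\bm{\varphi}}+d_{\bomega_{h,i}}$, which yields exactly
\begin{align}
M=\mathcal{O}\!\left(\Big(d_i+\tfrac{d_{\btheta}+d_{\bm{\varphi}}+d_{\bomega_{h,i}}}{N}\Big)\tfrac{1}{\xi^2}\log\tfrac{RL(L_s+L_H+1)}{\xi}+\tfrac{1}{N\xi^2}\log\tfrac{1}{\delta}\right),
\end{align}
so that $|\mathcal{L}(\omega_i)-\hat{\mathcal{L}}(\omega_i)|\le\xi$ holds with probability at least $1-\delta$. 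The remaining work is routine bookkeeping of constants; the only nontrivial ingredient is the Lipschitz estimate of the sheaf-diffusion-plus-attention pipeline flagged above.
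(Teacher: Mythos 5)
Your proposal is correct in substance and shares the paper's core skeleton: express the per-client risk as the composition $\ell_i \circ \mathrm{head}_{\bomega_{h,i}} \circ H \circ S$, use Lipschitz continuity in $(\bX_0,\btheta,\bm{\varphi},\bomega_{h,i})$ to reduce uniform convergence to a covering-number count over the $R$-ball with effective dimension $Nd_i + d_{\btheta} + d_{\bm{\varphi}} + d_{\bomega_{h,i}}$, and pick the cover resolution $\epsilon \asymp \xi/\bigl(L(L_s+L_H+1)\bigr)$ to absorb discretization error. Where you genuinely diverge is the concentration step: the paper does not hand-roll Hoeffding plus a union bound over the cover; it invokes Theorem~4 of Baxter's model of inductive bias learning as a black box, which directly yields $M = \mathcal{O}\bigl(\tfrac{1}{N\xi^2}\log\tfrac{C(\xi,H^N)}{\delta}\bigr)$ for the shared-representation multi-task setting, so the paper's only real work is bounding $\log C(\xi,H^N)$ via the triangle inequality and the assumed Lipschitz constants. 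Your pooled-sample argument over $NM$ points re-derives essentially the content of that cited theorem; this buys self-containedness (and makes explicit where the federated $1/N$ factor comes from), at the cost of having to justify that Hoeffding applies to the non-identically-distributed pooled samples and that the loss is bounded on the ball — details Baxter's result packages for you. One further mismatch in emphasis: you flag the Lipschitz estimate of the sheaf-diffusion-plus-attention pipeline (through $\Delta_{\calF^{(t)}}$, the $\bD^{-1/2}$ normalization, and the softmax) as the main obstacle, but the paper never confronts it — the global Lipschitz conditions ($L_S$, $L_{T_o}$, $L_{T_s}$, $L_H$) are taken as standing assumptions in the theorem's setup, exactly as boundedness within radius $R$ is. So your plan proves strictly more than the paper does; if you only want to match the paper's claim, you may assume those constants and skip what you identified as the hard part.
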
 

\begin{table*}[!t] 
\caption{Results on the non-overlapping node scenario. The reported results are mean and standard deviation over five different runs. The best and the second-best results are highlighted in bold and underlined, respectively. OOM refers to out-of-memory. }
\label{non-overlapping}
\begin{center}
\resizebox{0.9\textwidth}{!}{
\tiny
\renewcommand{\arraystretch}{1.2}
\begin{tabular}{lcccccc}
\toprule
\multirow{2}{*}{Methods} & \multicolumn{2}{c}{Cora}            & \multicolumn{2}{c}{Citeseer}        & \multicolumn{2}{c}{Pubmed}   \\ 
\cline{2-7}
                & 10 Clients & 20 Clients & 10 Clients & 20 Clients & 10 Clients & 20 Clients\\ 
\hline
Local           & 71.26$\pm$0.29   & 74.65$\pm$0.42    & 67.82$\pm$0.13    & 65.98$\pm$0.17  & 82.81$\pm$0.39  & 82.65$\pm$0.03  \\ \hline
FedAvg          & 72.38$\pm$2.45   & 69.81$\pm$13.28   & 65.71$\pm$0.37    & 63.08$\pm$7.38  & 79.88$\pm$0.07  & 78.48$\pm$7.65 \\
FedProx         &60.18$\pm$7.04    &48.22$\pm$6.81     &63.33$\pm$3.25     &64.85$\pm$1.35   &82.55$\pm$0.24   &80.50$\pm$0.25 \\
FedSage+        & 69.05$\pm$1.59   & 57.97$\pm$ 12.6   & 65.63$\pm$3.10    & 65.46$\pm$0.74  & 82.62$\pm$0.31  & 80.82$\pm$0.25  \\
FGGP            &66.57$\pm$0.56    &54.58$\pm$0.83     &60.99$\pm$0.68     &57.13$\pm$0.33   &76.84$\pm$0.59   &73.08$\pm$0.81 \\
\hline
FedPer          & 65.24$\pm$1.19  & 70.80$\pm$0.72  & 62.75$\pm$5.65  & 60.06$\pm$2.24   & 71.42$\pm$2.17  & 73.05$\pm$0.36    \\
pFedHN          & 65.24$\pm$0.21  & 70.65$\pm$2.21  & 63.45$\pm$0.44  & 58.98$\pm$1.60   & 77.22$\pm$1.74  & 73.20$\pm$2.47      \\
pFedGraph       & 76.57$\pm$1.37  & 76.61$\pm$0.63  & 71.61$\pm$0.64  & 67.43$\pm$0.49  & 80.38$\pm$0.30  & 80.48$\pm$0.58      \\
FED-PUB         & 80.25$\pm$0.54 & 81.04$\pm$0.31        & \udcloser{73.42$\pm$0.75}        & 66.87$\pm$0.21   & \udcloser{85.95$\pm$0.77}  & \udcloser{85.38$\pm$0.82}   \\
Flow            & \udcloser{81.81$\pm$0.14}   &\udcloser{81.42$\pm$0.18}  &73.05$\pm$0.24   & \udcloser{70.01$\pm$0.24}      &85.53$\pm$0.05             &84.48$\pm$0.09 \\
FedSheafHN (ours)& \textbf{83.49$\pm$0.22}  & \textbf{82.35$\pm$0.18}  & \textbf{75.27$\pm$0.39}  & \textbf{72.20$\pm$0.24}   & \textbf{86.50$\pm$0.09}  & \textbf{85.57$\pm$0.05}    \\ 
\bottomrule 
\toprule
\multirow{2}{*}{Methods} & \multicolumn{2}{c}{Amazon-Computer} & \multicolumn{2}{c}{Amazon-Photo}    & \multicolumn{2}{c}{ogbn-arxiv}        \\ 
\cline{2-7} 
                & 10 Clients & 20 Clients & 10 Clients & 20 Clients & 10 Clients & 20 Clients   \\ \hline
Local           & 85.90$\pm$0.35 & 87.79$\pm$0.49  &  76.21$\pm$1.34  &  81.93$\pm$0.59 & 64.92$\pm$0.09 &  65.06$\pm$0.05  \\ 
\hline
FedAvg         & 66.78$\pm$0.00  &  71.44$\pm$0.08  &  79.61$\pm$3.12  &  82.12$\pm$0.02    &  48.77$\pm$2.88   &     42.02$\pm$17.09  \\
FedProx        &83.81$\pm$1.09 &73.05$\pm$1.30 &80.92$\pm$4.64 &82.32$\pm$0.29 &64.37$\pm$0.18 &63.03$\pm$0.04 \\ 
FedSage+       & 80.50$\pm$1.30   & 70.42$\pm$0.85  & 76.81$\pm$8.24  & 80.58$\pm$1.15  & 64.52$\pm$0.14  & 63.31$\pm$0.20      \\
FGGP           &66.91$\pm$1.24 &48.25$\pm$0.85 &68.52$\pm$0.53 &54.85$\pm$0.57 & OOM & OOM \\
\hline
FedPer         & 67.41$\pm$0.67  & 72.91$\pm$0.50   & 76.10$\pm$2.18   &82.80$\pm$0.07    &45.31$\pm$0.33   &  47.55$\pm$0.01        \\
pFedHN         & 66.85$\pm$0.09  & 69.94$\pm$1.27  & 74.12$\pm$0.90  & 82.03$\pm$0.10  & 57.70$\pm$0.30  & 50.61$\pm$0.85      \\
pFedGraph      & 66.45$\pm$0.83  & 71.57$\pm$0.36  & 74.57$\pm$1.05 & 84.04$\pm$0.50  & 56.37$\pm$0.31  & 56.19$\pm$0.83      \\
FED-PUB        &88.00$\pm$0.24 & 89.99$\pm$0.05  & 93.73$\pm$0.22  & 91.04$\pm$0.32     &66.48$\pm$0.09           &66.99$\pm$0.18  \\
Flow           &\udcloser{88.67$\pm$0.12} &\udcloser{90.31$\pm$0.11} &\udcloser{93.94$\pm$0.07} &\udcloser{92.24$\pm$0.10} &\udcloser{68.91$\pm$0.04} &\udcloser{69.89$\pm$0.11} \\
FedSheafHN (ours) & \textbf{90.56$\pm$0.03} & \textbf{91.00$\pm$0.09} & \textbf{94.22$\pm$0.10}  & \textbf{92.99$\pm$0.05}   & \textbf{71.28$\pm$0.03}  & \textbf{71.75$\pm$0.09} \\ 
\bottomrule
\end{tabular}}
\end{center}
\end{table*}
\section{Experiments}\label{sect_exper}
We evaluate the performance of FedSheafHN on six datasets, focusing on node classification tasks within two distinct subgraph FL scenarios.  
We compare its performance with several state-of-the-art baselines using Federated Accuracy \cite{shamsian2021personalized} as the metric, defined as $\frac{1}{N} \sum_{i \in N} \mathrm{Acc}(f(G_{i};\bomega_{i}))$, where $\mathrm{Acc}(\cdot)$ denotes the accuracy of the given model.
The code is available at \texttt{https://github.com/CarrieWFF/FedSheafHN}.

\subsection{Experimental Settings}
\paragraph{Real-World Datasets}
Following experimental in \cite{zhang2021subgraph,baek2023personalized}, we partition datasets into segments, assigning each client a dedicated subgraph. This allocation ensures that each participant manages a segment of a larger, original graph.
Experiments are conducted on six datasets: Cora, CiteSeer \cite{sen2008collective}, Pubmed, and ogbn-arxiv \cite{hu2020open} for citation graphs, as well as Computer and Photo \cite{mcauley2015image,shchur2018pitfalls} for Amazon product graphs.
The partitioning of these datasets is executed through the METIS graph partitioning algorithm \cite{karypis1997metis}, with the number of subsets predetermined. 
Our experiments include both the non-overlapping and overlapping node scenarios. In the non-overlapping node scenario, the METIS output is directly utilized, creating distinct subgraphs without shared nodes, thus yielding a more heterogeneous setting. 
In the overlapping node scenario, the subgraphs are achieved by sampling smaller subgraphs from the initial METIS partitioned results.
Dataset statistics are provided in Appendix~\ref{app_dataset}.

\paragraph{Baselines}
We compare our results with following baselines. The FL methods focus on training a global model, include FedAvg \cite{mcmahan2017communication}, FedProx \cite{li2020federated}, FedSage+ \cite{zhang2021subgraph}, and FGGP \cite{wan2024federated}. The PFL methods aim to provide personalized models for individual clients, include FedPer \cite{arivazhagan2019federated}, pFedHN \cite{shamsian2021personalized}, pFedGraph \cite{ye2023personalized}, FED-PUB \cite{baek2023personalized}, and Flow \cite{panchal2024flow}. 

\paragraph{Implementation Details}
\label{implementation}
The client models are two-layer GCNs for all baselines. The HN used in FedSheafHN consists of an attention layer and a two-layer MLP. The hidden dimension of GCNs and HNs is set to 128. The Adam optimizer \cite{kinga2015method} is applied for optimization. Data splits are 40\% for training, 30\% for validation, and 30\% for testing. FL runs for 100 communication rounds on Cora, CiteSeer, Pubmed, and Computer, and 150 rounds on Photo and ogbn-arxiv, with 3 local epochs for Cora/CiteSeer, 15 for Pubmed, and 10 for the rest.
In our framework, server updates $G_s$ every $r_{in}=5$ communication rounds.
Hyperparameters used are given in Appendix~\ref{app_hyperparameters}. 

All experiments are conducted using PyTorch \cite{paszke2019pytorch} and PyTorch Geometric \cite{fey2019fast} on NVIDIA GeForce RTX 3090. 
The runtime of FedSheafHN depends mainly on the number of clients, as client training is performed sequentially for simulation. Generally, training 50 clients with 3 local epochs over 100 total rounds takes around 3 hours, with memory usage varying from 300 to 2500 MB across datasets.

\subsection{Experimental Results}
\begin{table*}[!tb]
\caption{Results on the overlapping node scenario. The reported results are mean and standard deviation over five different runs. The best and the second-best results are highlighted in bold and underlined, respectively. OOM refers to out-of-memory.}
\label{overlapping}
\begin{center}
\resizebox{0.9\textwidth}{!}{
\tiny
\renewcommand{\arraystretch}{1.2}
\begin{tabular}{lcccccc}
\toprule
\multirow{2}{*}{Methods} & \multicolumn{2}{c}{Cora}   & \multicolumn{2}{c}{Citeseer}   & \multicolumn{2}{c}{Pubmed}  \\ \cline{2-7} 
                & 30 Clients & 50 Clients & 30 Clients & 50 Clients & 30 Clients & 50 Clients   \\ \hline
Local           & 71.65$\pm$0.12  & 75.45$\pm$0.51 & 64.54$\pm$0.42  & 66.68$\pm$0.44 & 80.72$\pm$0.16  & 80.54$\pm$0.11    \\ \hline
FedAvg          & 63.84$\pm$2.57  & 57.98$\pm$0.06  & 66.11$\pm$1.50  & 58.00$\pm$ 0.29 & 83.11$\pm$0.03  & 82.24$\pm$0.73      \\
FedProx         &51.38$\pm$1.74 &56.27$\pm$9.04 &66.11$\pm$0.75 &66.53$\pm$0.43 &82.13$\pm$0.13 &80.50$\pm$0.46 \\
FedSage+        & 51.99$\pm$0.42  & 55.48$\pm$11.5  & 65.97$\pm$0.02  & 65.93$\pm$0.30  & 82.14$\pm$0.11  & 80.31$\pm$0.68      \\
FGGP            &62.59$\pm$0.49 &53.50$\pm$0.48 &57.50$\pm$0.39 &52.90$\pm$0.53 &71.70$\pm$0.82 &70.49$\pm$0.43 \\
\hline
FedPer          & 54.70$\pm$2.58  & 64.66$\pm$0.07  & 58.91$\pm$2.41  & 58.33$\pm$2.91  & 70.08$\pm$0.38  & 71.13$\pm$0.04     \\
pFedHN          & 48.71$\pm$2.19  & 49.19$\pm$2.54  & 54.67$\pm$1.28  & 46.34$\pm$2.24  & 69.21$\pm$1.37  & 65.55$\pm$1.41      \\
pFedGraph       & \udcloser{77.72$\pm$0.41}  & \udcloser{77.69$\pm$0.20}  & 69.60$\pm$0.11  & \udcloser{67.84$\pm$0.75}  & 83.12$\pm$0.37  & 82.60$\pm$0.27      \\
FED-PUB         & 75.66$\pm$1.02  & 76.52$\pm$0.87  & 68.76$\pm$0.26  & 66.95$\pm$0.70  & \textbf{84.97$\pm$0.30}  & \textbf{84.38$\pm$0.38}      \\
Flow            &77.62$\pm$0.17 &77.45$\pm$0.21 &\udcloser{70.86$\pm$0.10} &67.39$\pm$0.15 &82.50$\pm$0.07 &82.11$\pm$0.23 \\
FedSheafHN (ours)& \textbf{80.30$\pm$0.11}  & \textbf{78.06$\pm$0.22}  & \textbf{71.90$\pm$0.15}  & \textbf{68.59$\pm$0.12}   & \udcloser{84.45$\pm$0.03}  & \udcloser{83.65$\pm$0.02}    \\ 
\bottomrule
\toprule
\multirow{2}{*}{Methods} & \multicolumn{2}{c}{Amazon-Computer} & \multicolumn{2}{c}{Amazon-Photo}    & \multicolumn{2}{c}{ogbn-arxiv}        \\ 
                \cline{2-7} 
                & 30 Clients & 50 Clients & 30 Clients & 50 Clients & 30 Clients & 50 Clients  \\ \hline
Local           & 64.65$\pm$1.09 &  68.28$\pm$1.02    &   72.60$\pm$0.78  & 80.68$\pm$1.48   &  61.32$\pm$0.04 &60.04$\pm$0.04      \\ \hline
FedAvg          & 68.99$\pm$3.97 &  67.69$\pm$0.12    &  83.74$\pm$0.72 &  75.59$\pm$0.06   &   49.91$\pm$2.84  & 52.26$\pm$1.02\\
FedProx &83.84$\pm$0.89 &76.60$\pm$0.47 &89.17$\pm$0.40 &72.36$\pm$2.06 &59.86$\pm$0.16 &61.12$\pm$0.04 \\ 
FedSage+        & 81.33$\pm$1.20  & 76.72$\pm$0.39  & 88.69$\pm$0.99  & 72.41$\pm$1.36  & 59.90$\pm$0.12  & 60.95$\pm$0.09      \\
FGGP &70.44$\pm$1.65 &64.04$\pm$1.25 &75.01$\pm$0.51 &62.40$\pm$1.50 & OOM & OOM \\
\hline
FedPer          & 62.06$\pm$0.08 &  67.87$\pm$0.08    & 68.82$\pm$0.34 &  75.69$\pm$0.14  & 42.19$\pm$0.01 & 43.77$\pm$0.33               \\
pFedHN          & 59.88$\pm$1.31  & 59.63$\pm$2.42  & 67.85$\pm$0.23  & 72.07$\pm$1.52  & 43.64$\pm$0.40  & 41.38$\pm$0.66      \\
pFedGraph       & 76.77$\pm$1.53  & 74.46$\pm$0.42  & 77.07$\pm$0.69  & 87.81$\pm$1.96  & 59.29$\pm$0.39  & 58.96$\pm$0.56      \\ 
FED-PUB         & \udcloser{88.40$\pm$0.80} &  88.77$\pm$0.07    & 92.01$\pm$0.07 &  91.76$\pm$0.35  & 63.96$\pm$0.10 &  64.66$\pm$0.19   \\ 
Flow            &87.61$\pm$0.10 &\udcloser{88.83$\pm$0.08} &\udcloser{92.74$\pm$0.05} &\udcloser{91.88$\pm$0.04} &\udcloser{65.84$\pm$0.03} &\udcloser{65.86$\pm$0.03} \\
FedSheafHN (ours)& \textbf{89.25$\pm$0.02} & \textbf{89.29$\pm$0.05}  & \textbf{93.34$\pm$0.04}  & \textbf{92.36$\pm$0.06}   & \textbf{67.69$\pm$0.10}  & \textbf{67.51$\pm$0.05}    \\ \bottomrule
\end{tabular}
}
\end{center}
\end{table*}

\paragraph{Non-Overlapping Scenario}
In Table~\ref{non-overlapping}, we present the node classification results for the non-overlapping scenario, characterized by a notably heterogeneous subgraph FL challenge. FedSheafHN stands out as it consistently outperforms other baseline approaches. 
FED-PUB, as our primary target baseline for subgraph PFL, achieves commendable results by identifying community structures via similarity estimation and selectively filtering irrelevant weights from diverse communities. However, in scenarios with many distinct clients and extreme heterogeneity, relying solely on similarity measures is insufficient for adequately inferring client relationships.
Flow dynamically selects local and global model parameters based on the input instance. Although it performs well, it is not specifically designed for subgraph FL and does not fully exploit the underlying relationships among clients.
FedSheafHN enhances relationship inference and feature representation in highly heterogeneous scenarios, thus boosting the performance of HNs in creating personalized client models. Its adaptability to diverse and complex cases makes it a promising choice for addressing complexities in tough scenarios.

\paragraph{Overlapping Scenario }
The overlapping scenario, with lower heterogeneity as outlined in Table~\ref{overlapping}, still showcases FedSheafHN's effectiveness across various datasets. Although its advantages are less pronounced compared to the non-overlapping scenario, the diverse feature representation of FedSheafHN is designed to address the intricacies of this setting. It consistently outperforms state-of-the-art baselines, highlighting its robustness and adaptability to varying degrees of heterogeneity, positioning it as a more reliable option.
\paragraph{Convergence Plots}
Fig.~\ref{conv_ov} illustrates the fast convergence of FedSheafHN, likely due to its ability to discern client cooperation and enable intelligent information sharing through data-driven client descriptions. Furthermore, the HNs, with learned attention and batch processing of all clients, efficiently generate model parameters and integrate global information.

\paragraph{Effectiveness of Personalization}
Our goal is to improve the accuracy of all clients by generating personalized models through sheaf collaboration. Effective collaboration should benefit all clients, regardless of heterogeneity in local subgraphs. To evaluate this, we measure the standard deviation of local accuracies across clients, where a lower value indicates more effective personalization. As shown in Table~\ref{eff_per_table}, FedSheafHN achieves the lowest standard deviation, demonstrating a more stable and balanced performance across all clients.
\begin{table}[ht!]
\caption{Mean and standard deviation of local accuracies across clients on the ogbn-arxiv dataset.}
\vspace{3mm}
\label{eff_per_table}
\centering
\begin{tabular}{lcc}
\toprule
\multirow{2}{*}{Methods} & non-overlapping & overlapping \\ 
                         & 20 clients      & 30 clients   \\ 
\midrule
pFedGraph                  & 56.01$\pm$16.10  & 59.13$\pm$12.60    \\
FED-PUB                    & 66.67$\pm$10.48  & 63.94$\pm$10.43    \\
Flow                       & 69.78$\pm$9.06   & 65.84$\pm$9.15    \\
FedSheafHN (ours)          & 71.88$\pm$7.92   & 67.78$\pm$7.98    \\ 
\bottomrule
\end{tabular}
\vspace{1mm}
\end{table}

\begin{figure}[!ht]
\centering
\includegraphics[width=\columnwidth]{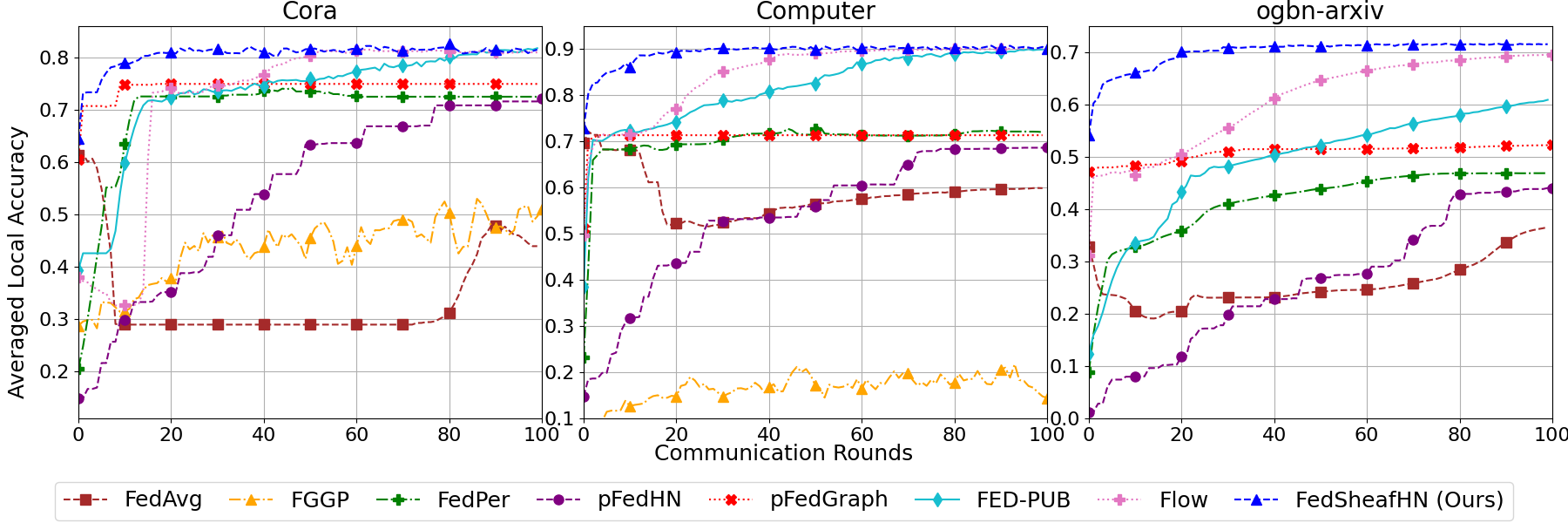}
\caption{Convergence plots in the non-overlapping scenario (20 clients) across three datasets.}
\label{conv_ov}
\vspace{6mm}
\end{figure}

\begin{figure}[!hbt] 
\centering
\includegraphics[width=\columnwidth]{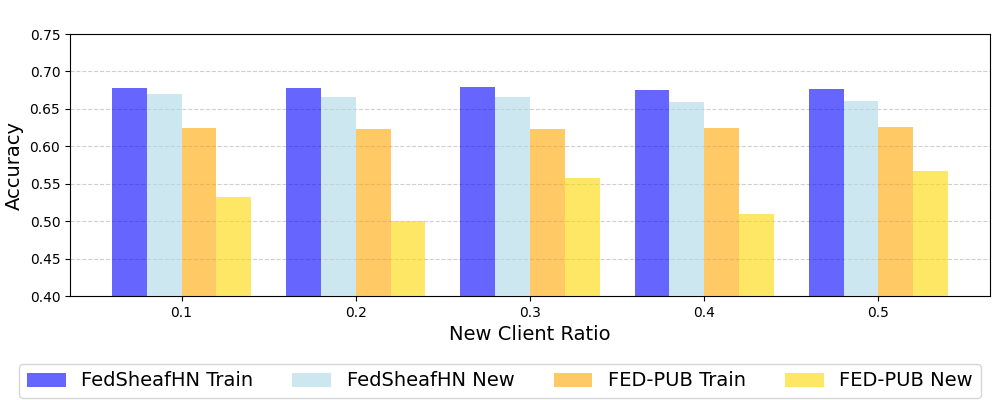}
\caption{Generalization results on ogbn-arxiv dataset with 30 clients in overlapping scenario. "Train" indicates average test accuracy for trained clients, and "New" for newly joined clients with new client ratios from 0.1 to 0.5. }
\label{fig_newclients}
\vspace{1mm}
\end{figure}

\paragraph{Generalization to New Clients}
We evaluate FedSheafHN on unseen clients without retraining or fine-tuning the shared model. Once $S(\cdot; \btheta)$ and $H(\cdot; \bm{\varphi})$ are trained, we simply freeze $\btheta$ and $\bm{\varphi}$, initialize the new client's model, and train locally to derive graph-level embedding $\bx_{new}$.
As shown in Fig.~\ref{fig_newclients}, FedSheafHN achieves comparable accuracy for new clients with minimal performance reduction, requiring only one communication round for $\bx_{new}$.

\begin{table}[tb!]
\caption{Ablation studies of FedSheafHN on the ogbn-arxiv dataset.}
\vspace{3mm}
\label{ab_ogbn_table}
\centering
\begin{tabular}{lcc}
\toprule
\multirow{2}{*}{Methods} & non-overlapping & overlapping \\ 
                         & 20 clients      & 30 clients   \\ 
\midrule
Base model (FedAvg)                  & 36.56  & 42.95    \\
+ Hypernetwork                       & 61.26  & 56.53    \\
+ Collaboration graph                & 68.51  & 64.68    \\
+ Sheaf diffusion                    & 70.05  & 65.87    \\
+ Attention                          & 70.97  & 67.09    \\
+ Dynamic embedding (ours)           & 71.87  & 67.89    \\ 
\bottomrule
\end{tabular}
\end{table}

\begin{table}[t!]
\caption{Different \gls{GNN} models on collaboration graph.}
\vspace{3mm}
\label{ab_ogbn_table2}
\centering
\begin{threeparttable}
\begin{tabular}{lcc}
\toprule
\multirow{2}{*}{Methods} & non-overlapping & overlapping \\ 
                         & 20 clients      & 30 clients \\ 
\midrule
w/o Sheaf diffusion\tnote{a} & 69.75 & 65.66 \\
w/ GCN\tnote{b}               & 69.86 & 65.96 \\
w/ GAT                        & 69.92 & 65.90 \\
w/ Sheaf diffusion (ours)     & 71.87 & 67.89 \\ 
\bottomrule
\end{tabular}

\begin{tablenotes}
    \scriptsize
    \item[a] Remove the ``sheaf diffusion'' part in FedSheafHN.
    \item[b] Replace sheaf diffusion model with GCN.
\end{tablenotes}
\end{threeparttable}
\end{table}

\subsection{Ablation Studies}
\paragraph{Effect of Each Module}
Table~\ref{ab_ogbn_table} shows our ablation study, adding components to the base model (FedAvg) to assess their impact. 
The results highlight the positive contributions of constructing collaboration graph (collaboration graph), dynamic embedding updates during training (dynamic embedding), sheaf diffusion for graph enhancement (sheaf diffusion), an improved batch-wise HNs (hypernetwork), and incorporating an attention layer in the HNs (attention).
All components improve performance, with the collaboration graph and enhanced hypernetwork being most impactful. Notably, replacing the collaboration graph with one-hot client vectors in the ``+ Hypernetwork'' variant results in noticeably performance drop.
The collaboration graph with client embeddings refines representations, enabling HNs to generate tailored model parameters.

\paragraph{Effect of Alternative GNNs}
Table~\ref{ab_ogbn_table2} shows that removing or replacing sheaf diffusion with other GNNs consistently results in lower performance. This highlights the effectiveness of sheaf diffusion in enhancing the collaboration graph by uncovering latent relationships.

\paragraph{Effect of Varying Local Epochs}
Fig.~\ref{fig_localep} shows that increasing local update steps $T_c$ can cause local models diverging towards their subgraphs, highlighting the trade-off between local training and convergence. 
Thus, more local epochs do not always improve performance, and the optimal number varies by dataset.

\subsection{Further Discussions}
\paragraph{Computational Overhead}
Both sheaf collaboration and HNs run entirely on the server, so clients incur no extra computation compared to standard FL baselines. The only added communication is graph-level embedding, which introduces negligible overhead. As shown in Table~\ref{train_time}, the server-side training time introduced by sheaf diffusion and HNs is modest and brings clear performance gains. 

Moreover, increasing the number of clients does not add new trainable parameters on the server. The server training time increases by just 0.34\% and 4.42\% when scaling from 20 to 50 and 100 clients, respectively, indicating good scalability.

\paragraph{Robustness to Malicious Clients}
Fig.~\ref{fig_attack} illustrates scenarios where malicious clients attempt to disrupt the training process by sending arbitrary graph-level or functional embeddings to the server, with malicious client ratios set at 0, 0.2, 0.4, and 0.6. Two types of embeddings are considered for the attack: (1) same-value parameters generated by $\bx_{i}^{(ma)}=a\bI_{d_{{\bx}_{i}}}$, where $\bI_{d_{\bx_{i}}}\in \mathbb{R}^{d_{\bx_{i}}}$ is a vector of ones and $a\sim \mathcal{N}(0,\tau ^{2})$, and (2) Gaussian noise parameters generated by $\bx_{i}^{(ma)}\sim \mathcal{N}(\mathbf{0} _{d_{\bx_{i}}},\tau^{2}\bI_{d_{\bx_{i}}})$ \cite{lin2022personalized}. Compared to the subgraph FL baseline FED-PUB, which utilizes functional embeddings of the local GNNs using random graphs as inputs to compute similarities between clients and accordingly perform model aggregation, our method demonstrates greater resilience to malicious clients.
\begin{figure}[!bt] 
\centering
\includegraphics[width=\columnwidth]{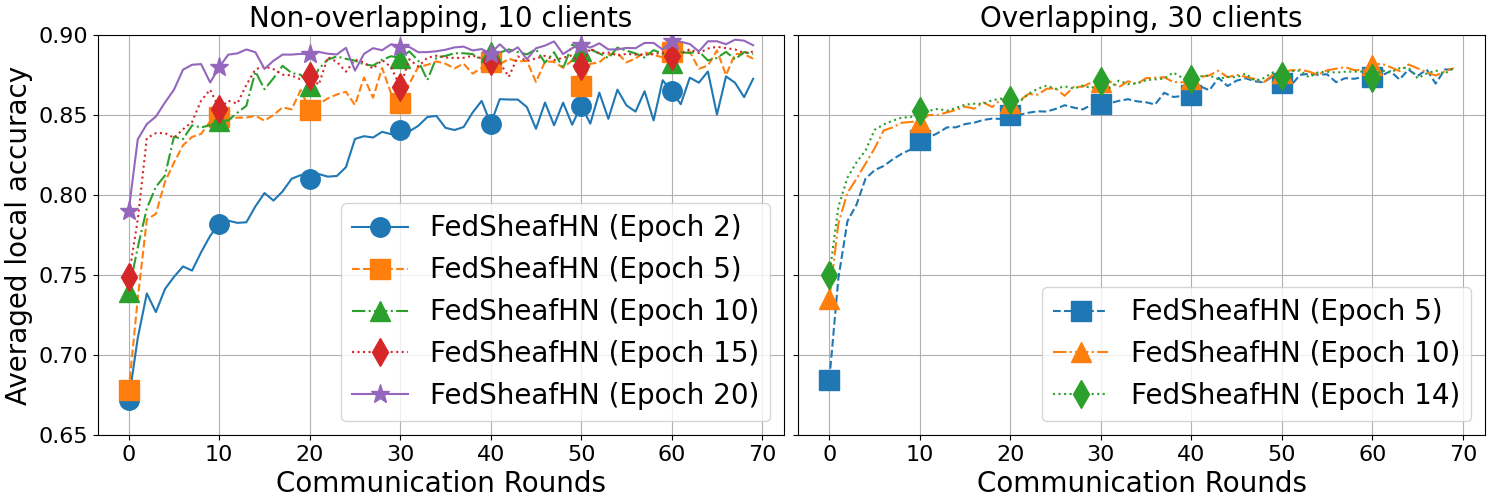}
\caption{Performance on the Computer dataset with varying local epochs.}
\label{fig_localep}
\vspace{6mm}
\end{figure}
\begin{figure}[!tb]
\centering
\includegraphics[width=\columnwidth]{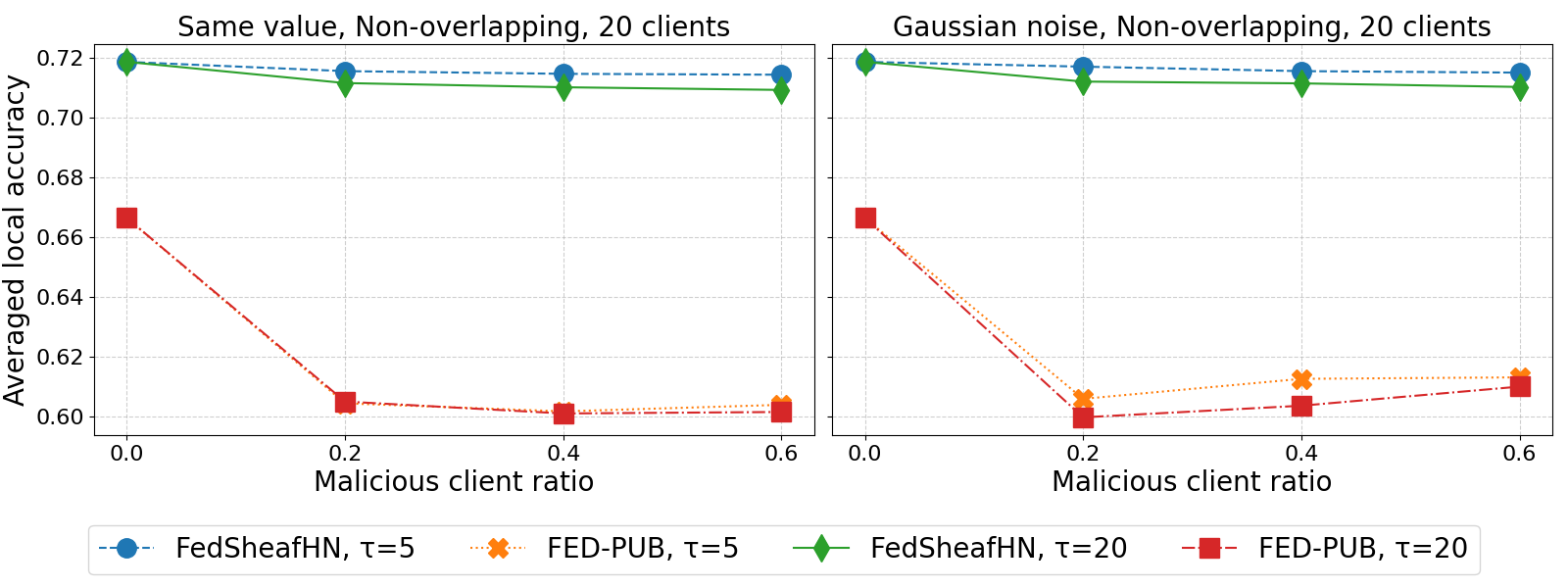}
\caption{Performance on the ogbn-arxiv dataset with malicious clients.}
\label{fig_attack}
\vspace{6mm}
\end{figure}
\begin{table}[!tb]
\caption{Server training time per communication round and total memory usage (ogbn-arxiv, 20 clients, non-overlapping scenario).}
\label{train_time}
\vspace{3mm}
\centering
\begin{tabular}{lcc}
\toprule
\textbf{Method} & \textbf{Time (s)} & \textbf{Memory (MiB)} \\
\midrule 
FedSheafHN           & 0.588  & 978 \\
w/o Sheaf diffusion  & 0.559  & 974 \\
w/ GCN               & 0.580  & 978 \\
\midrule
Base model (FedAvg)   & 0.130 & 970 \\
+ Hypernetwork       & 0.407 & 974 \\
\midrule 
50 clients           & 0.590 & 978 \\
100 clients          & 0.614 & 980 \\
\bottomrule
\end{tabular}
\end{table}

\section{Conclusion}\label{sect_conc}
We have proposed FedSheafHN, a novel framework for personalized subgraph FL that centers on sheaf collaboration and HNs to address client heterogeneity. FedSheafHN proves highly effective in both non-overlapping and overlapping scenarios. Our theoretical analysis further ensures convergence and generalization guarantees, reinforcing the practical viability of the approach. 
Experimental results demonstrate its superior performance, fast convergence, effectiveness of personalization and ability to generalize to new clients. Our ablation studies emphasize the significant contributions of each component, especially the client graph-level embedding in constructing the collaboration graph. In general, FedSheafHN provides a promising strategy for effective collaborative learning while preserving the unique data characteristics of each client within a federated network.






\bibliography{FedSheafHN_ref}

\newpage
\onecolumn
\appendix
\renewcommand{\thetheorem}{\arabic{theorem}}
\setcounter{theorem}{0}
\section{Appendix - Experimental Setups}\label[Appendix]{app_setup}

\subsection{Datasets}\label[Appendix]{app_dataset}
\label{app_datasets}
\vspace{-3mm}
\begin{table*}[!ht]
\caption{Dataset statistics for the non-overlapping node scenario for both the original graph and its split subgraphs \protect\cite{baek2023personalized}. ``Ori'' denotes the original largest connected components in the graph. The clustering coefficient, reflecting node clustering within subgraphs, is calculated by averaging node coefficients \protect\cite{watts1998collective}. Heterogeneity, indicating subgraph dissimilarity, is measured by the median Jensen-Shannon divergence of label distributions across subgraph pairs.}
\label{nonoverlapping_dataset_statistics} 
\vspace{-2mm}
\begin{center}
\footnotesize
\resizebox{\textwidth}{!}{
\begin{tabular}{lccccccccc}
\toprule
\multirow{2}{*}{} & \multicolumn{3}{c}{Cora}   & \multicolumn{3}{c}{Citeseer}   & \multicolumn{3}{c}{Pubmed}      \\ \cline{2-10} 
                         & Ori & 10 Clients & 20 Clients  & Ori & 10 Clients & 20 Clients  & Ori & 10 Clients & 20 Clients    \\ 
\midrule
\# Classes               & \multicolumn{3}{c}{7}       & \multicolumn{3}{c}{6}       & \multicolumn{3}{c}{3}               \\
\#  Nodes                & 2,485   &  249    &  124    &  2,120   &  212   & 106     &  19,717   &  1,972   &  986         \\
\# Edges                 & 10,138  &  891    &  422    &  7,358   &  675   & 326     &  88,648   &  7,671   & 3,607        \\
Clustering Coefficient   & 0.238   &  0.259  &  0.263  &  0.170   &  0.178 & 0.180   &  0.060    &  0.066   & 0.067        \\
Heterogeneity            & N/A     &  0.606  &  0.665  &  N/A     &  0.541 & 0.568   &  N/A      &  0.392   & 0.424        \\ 
\bottomrule
\multicolumn{10}{l}{}                                                                                                                         \\ 
\toprule
\multirow{2}{*}{} & \multicolumn{3}{c}{Amazon-Computer} & \multicolumn{3}{c}{Amazon-Photo}    & \multicolumn{3}{c}{ogbn-arxiv}        \\ \cline{2-10} 
                & Ori & 10 Clients & 20 Clients  & Ori & 10 Clients & 20 Clients  & Ori & 10 Clients & 20 Clients    \\ 
\midrule
\# Classes               & \multicolumn{3}{c}{10}       & \multicolumn{3}{c}{8}       & \multicolumn{3}{c}{40}               \\
\#  Nodes                &  13,381    &  1,338          &   669          & 7,487      &   749         &      374      &  169,343    &   16,934         & 8,467   \\
\# Edges                 &  491,556   &    36,136       &     15,632     & 238,086    &    19,322     &     8,547     &  2,315,598  &    182,226       &  86,755\\
Clustering Coefficient   &   0.351    &    0.398        &   0.418        &  0.410     &    0.457      &        0.477  &  0.226      &  0.259           &   0.269\\
Heterogeneity            &  N/A       &   0.612         &    0.647       &  N/A       &     0.681     &    0.751      & N/A         &    0.615         &  0.637        \\ 
\bottomrule
\end{tabular}
}
\vspace{-7mm}
\end{center}
\end{table*}

\begin{table*}[!ht]
\caption{Dataset statistics for the overlapping node scenario for both the original graph and its split subgraphs.}
\label{overlapping_dataset_statistics}
\vspace{-2mm}
\begin{center}
\footnotesize
\resizebox{\textwidth}{!}{
\begin{tabular}{lccccccccc}
\toprule
\multirow{2}{*}{} & \multicolumn{3}{c}{Cora}   & \multicolumn{3}{c}{Citeseer}   & \multicolumn{3}{c}{Pubmed}  \\ \cline{2-10} 
                & Ori & 30 Clients & 50 Clients  & Ori & 30 Clients & 50 Clients  & Ori & 30 Clients & 50 Clients   \\ 
\midrule
\# Classes               & \multicolumn{3}{c}{7}       & \multicolumn{3}{c}{6}       & \multicolumn{3}{c}{3}               \\
\#  Nodes                & 2,485    &   207         &  124           &  2,120   &     177      &     106        
&   19,717   &    1,643        &     986          \\
\# Edges                 & 10,138   &   379         &    215         &  7,358   &   293       &     170        
&   88,648  &    3,374        &      1,903 \\
Clustering Coefficient   & 0.238    &  0.129          &   0.125           &  0.170   &    0.087      &   0.096          
&  0.060    &     0.034        &     0.035          \\
Heterogeneity            & N/A      &   0.567         &  0.613           &  N/A     &   0.494        &   0.547          
&  N/A   &     0.383        &    0.394   \\ 
\bottomrule
\multicolumn{10}{l}{}                                                                                                                         \\ 
\toprule
\multirow{2}{*}{} & \multicolumn{3}{c}{Amazon-Computer} & \multicolumn{3}{c}{Amazon-Photo}    & \multicolumn{3}{c}{ogbn-arxiv}        \\ \cline{2-10} 
                & Ori & 30 Clients & 50 Clients & Ori & 30 Clients & 50 Clients & Ori & 30 Clients & 50 Clients  \\ 
\midrule
\# Classes               & \multicolumn{3}{c}{10}       & \multicolumn{3}{c}{8}       & \multicolumn{3}{c}{40}               \\
\#  Nodes                &  13,381    &    1,115         &   669          & 7,487      &     624       &       374     &  169,343    &   14,112         &    8,467   \\
\# Edges                 &  491,556   &   16,684         &   8,969        & 238,086    &     8,735     &      4,840    &  2,315,598  &     83,770       &   44,712 \\
Clustering Coefficient   &   0.351    &    0.348         &   0.359        &  0.410     &    0.391      &      0.410    &  0.226      &  0.185           &   0.191      \\
Heterogeneity            &  N/A       &   0.577          &     0.614      &  N/A       &     0.696     &       0.684   & N/A         &    0.606         &   0.615       \\ 
\bottomrule
\end{tabular}
}
\end{center}
\vspace{-3mm}
\end{table*}

We present statistical analyses derived from six distinct benchmark datasets \cite{sen2008collective,mcauley2015image,shchur2018pitfalls,hu2020open}: Cora, CiteSeer, Pubmed, and ogbn-arxiv for citation graphs, as well as Computer and Photo for Amazon product graphs. These datasets serve as the foundation for our experimental investigations, covering both non-overlapping and overlapping and node scenarios, detailed in Tables~\ref{nonoverlapping_dataset_statistics} and \ref{overlapping_dataset_statistics}. The table provides a comprehensive overview of key metrics for each subgraph, encompassing node and edge counts, class distribution, and clustering coefficients \cite{baek2023personalized}. The clustering coefficient, indicating the extent of node clustering within individual subgraphs, is calculated by first determining the coefficient \cite{watts1998collective} for all nodes and subsequently computing the average. The heterogeneity, reflecting dissimilarity among disjointed subgraphs, is measured by calculating the median Jenson-Shannon divergence of label distributions across all subgraph pairs. These statistics offer a detailed understanding of the structural intricacies and relationships within the benchmark datasets utilized in the experiments. In partitioning datasets, we randomly allocate 40\% of nodes for training, 30\% for validation, and another 30\% for testing across all datasets.

We delineate the procedure for partitioning the original graph into multiple subgraphs, tailored to the number of clients participating in FL. The METIS graph partitioning algorithm \cite{karypis1997metis} is employed to effectively segment the original graph into distinct subgraphs, offering control over the number of disjoint subgraphs through adjustable parameters. In scenarios with non-overlapping nodes, each client is assigned a unique disjoint subgraph directly derived from the METIS algorithm output. For example, setting the METIS parameter to 10 results in 10 distinct disjoint subgraphs, each allocated to an individual client.
Conversely, in scenarios involving overlapping nodes across subgraphs, the process begins by dividing the original graph into 6 and 10 disjoint subgraphs for configurations with 30 and 50 clients, respectively, utilizing the METIS algorithm. Subsequently, within each split subgraph, half of the nodes and their associated edges are randomly sampled, forming a subgraph assigned to a specific client. This iterative process is repeated 5 times, generating 5 distinct yet overlapped subgraphs for each split subgraph obtained from METIS. This meticulous approach ensures a varied yet controlled distribution of data across clients, accommodating both non-overlapping and overlapping node scenarios within the framework of FL.

\subsection{Algorithm}
\label[Appendix]{app_algorithm}

\begin{algorithm}[H]
\caption{FedSheafHN}
\label{alg_fedsheafhn}
\begin{algorithmic}
    \State {\bfseries Input:} Communication rounds $R$, number of clients $N$, number of client local training rounds $T_c$, interval between client collaboration graph update $r_{in}$.
    \State {\bfseries Initialization:} 
    \For{client $i\in [N]$}
        \State Initialize client model parameter $\bomega_{i}=(\bomega_{b,i}, \bomega_{h,i})$. 
        \State \textproc{Client-Update}$(0, i, \bomega_{b,i})$
    \EndFor
    \State \
    \State {\bfseries Server:}
    \For{communication round $r\in \set{0,\dots,R}$}
        \If{$(r \mod r_{in}) == 0$}
            \State Set $\bX_0=(\bx_i\tc{c})_{i\in [N]}$ received from clients.
            \State Update $G_{s}$ in (\ref{Gs}) with received $\bX_0$.
        \Else
            \State Set the current $\bX_0$ to the value of $\bX_0$ from the previous communication round.
        \EndIf 
        \State Compute $\bX_{T_s}=S(\bX_0;\btheta)$. 
        \State Generate $\bOmega_b\tc{s} = H(\bX_{T_s};\bm{\varphi})$ and transmit to clients.
        \For{client $i$ in $[N]$}
            \State \textproc{Client-Update}$(r, i, \bomega_{b,i}\tc{s})$ 
        \EndFor
        \State Gradient descent update of $\btheta$ via $\btheta := \btheta - \alpha  \Delta \btheta$. 
        \State Gradient descent update of $\bm{\varphi}$ via $\bm{\varphi} := \bm{\varphi} - \beta \Delta \bm{\varphi}$.
    \EndFor
    \State \
    \State {\bfseries Client:}
    \Procedure{Client-Update}{$r$, $i$, $\bomega_{b,i}\tc{s}$} 
        \State Set $\bomega_{b,i}=\bomega_{b,i}\tc{s}$. 
        \For{epoch $\in [T_c]$}
            \State Gradient descent update of $\bomega_{i}$ via $\bomega_i := \bomega_i - \gamma \nabla_{\bomega_i}\mathcal{L}_i(\bomega_i)$.
        \EndFor
        \If{$(r \mod r_{in}) == 0$}
            \State Compute graph-level embedding $\bx_i\tc{c}$ using (\ref{graph_embedding}) and transmit to the server.
        \EndIf
        \State Transmit $\Delta \bomega_{b,i}:= \bomega_{b,i}^{(c)}-\bomega_{b,i}^{(s)}$ to the server.
    \EndProcedure
\end{algorithmic}
\end{algorithm}

\subsection{Hyperparameters}\label[Appendix]{app_hyperparameters}
The hyperparameter ranges used in our experiments are listed in Table~\ref{hp_range}.

\begin{table}[H]
\caption{Hyperparameter ranges}
\label{hp_range}
\vspace{3mm}
\centering
\newcommand{\tabincell}[2]{\begin{tabular}{@{}#1@{}}#2\end{tabular}} 
\begin{tabular}{lcc}
\toprule
\multirow{4}{*}{\tabincell{c}{Sheaf diffusion \\ model}} & Learning rate   & \tabincell{c}{(0.02, 0.01, 0.001, \\ 0.0001, 0.00001) \\}\\
\multirow{4}{*}{}                      & Hidden channels & (5, 10, 20) \\
\multirow{4}{*}{}                      & \tabincell{c}{Weight decay  \\} & 0.0005 \\
\multirow{4}{*}{}                      & Layer dropout & 0-0.6 \\
\midrule
\multirow{2}{*}{Hypernetworks}         & Learning rate & (0.02, 0.01, 0.001 )\\
\multirow{2}{*}{}                      & Layer dropout & 0.3 \\
\midrule
\multirow{4}{*}{Client model}      & Learning rate & (0.02, 0.01, 0.001)\\
\multirow{4}{*}{}                      & Weight decay & 0.0005 \\
\multirow{4}{*}{}                      & Layer dropout  & 0-0.6 \\
\bottomrule
\end{tabular}
\end{table}

\newpage
\section{Appendix - Convergence Proofs} \label[Appendix]{app_convergence-proofs}

Let $\omega_t=(\eta_t,\omega_{h,i,t,i\in[N]})$ denotes the parameters at communication round t.
$g_{t,i,l}$ denotes stochastic gradient for client $i$ at round $t$, local epoch $l$. Denote $\omega_{b,i}=M(\eta)$.

The overall objective is $\mathcal{L}(\omega)=\frac{1}{N} \sum_{i=1}^{N}\mathcal{L}_i(\omega_i)$, where
\begin{align*}
    \mathcal{L}_i(G,\omega_i)&=\mathcal{L}_i(G,(\omega_{b,i},\omega_{h,i})\\
    &=\mathcal{L}_i(G,(f_{i,1}(\omega_{b,i})),f_{i,2}(\omega_{h,i})
\end{align*}

\subsection{Assumptions}\label{assumptions}

Following the literature \cite{scott2024pefll}, we make the assumptions separately for each component.

\begin{itemize}
\item Bounded Gradients: there exist constants $b_1, b_2, b_3$, such that
\begin{align*}
&\|\nabla_{\omega_{h,i}} f_{i,1}(\omega_{b,i})\| \leq b_1  \\
&\|\nabla_{\omega_{h,i}} f_{i,1}(\omega_{h,i})\| \leq b_2  \\
&\|\nabla_{y} \omega_{b,i}\| \leq b_3 
\end{align*}

\item Smoothness: there exist constants $L_1, L_4, L_5, L_7$, such that
\begin{align*}
&\|\omega_{b,i} - \omega_{b,i}^\prime\| \leq L_1 \|y - y^\prime\|  \\
&\|\nabla_{\omega_{h,i}} f_{i,2}(\omega_{h,i}) - \nabla_{\omega_{h,i}} f_{i,2}(\omega_{h,i}^\prime)\| 
\leq L_4 \|\omega_{h,i} - \omega_{h,i}^\prime\|  \\
&\|\nabla_{\omega_{b,i}} f_{i,1}(\omega_{b,i}) - \nabla_{\omega_{b,i}} f_{i,1}(\omega_{b,i}^\prime)\| 
\leq L_7 \|\omega_{b,i} - \omega_{b,i}^\prime\|  \\
&\|\nabla_{y} \omega_{b,i} - \nabla_{y} \omega_{b,i}^\prime\| 
\leq L_5 \|\eta - \eta^\prime\| 
\end{align*}

\item Bounded Variance: there exists a constant $\sigma_1$, such that
\begin{equation*}
\mathbb{E}\|\nabla_{\omega_{b,i}} f_{i,1}(\omega_{b,i}) - \nabla_{\omega_{b,i}} \hat{f_{i,1}}(\omega_{b,i})\|^2 
\leq \sigma_1^2 
\end{equation*}

\item Bounded Dissimilarity: there exists a constant $L_G$, such that with $\overline{\mathcal{L}}(\omega) = \frac{1}{N} \sum_{i=1}^N \mathcal{L}_i(\omega)$:
\begin{align*}
\frac{1}{N} \sum_{i=1}^N \mathbb{E} \|\nabla \mathcal{L}_i(\omega) - \nabla \overline{\mathcal{L}}(\omega)\|^2 &\leq L_G^2 
\end{align*}

\end{itemize}

\subsection{Convergence}
\begin{Lemma}\label{Lemma1}
    There exists a constant $L$ which $\nabla \mathcal{L}(\omega)$ is $L$-smooth.
\end{Lemma}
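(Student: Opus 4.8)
The plan is to prove that $\mathcal{L}$ is $L$-smooth (i.e.\ that $\nabla\mathcal{L}$ is $L$-Lipschitz) by exploiting the block structure of the parameter vector $\omega=(\eta,\omega_{h,1},\dots,\omega_{h,N})$. Since $\mathcal{L}(\omega)=\frac1N\sum_{i=1}^N\mathcal{L}_i(\omega_i)$ and each $\mathcal{L}_i$ depends on $\omega$ only through $\omega_{b,i}=M(\eta)$ and through its own head parameter $\omega_{h,i}$, the gradient splits into a head block $\nabla_{\omega_{h,i}}\mathcal{L}=\frac1N\nabla_{\omega_{h,i}}\mathcal{L}_i$ and a server block $\nabla_{\eta}\mathcal{L}=\frac1N\sum_{i=1}^N(\nabla_y\omega_{b,i})\T\,\nabla_{\omega_{b,i}}\mathcal{L}_i$ obtained by the chain rule through $M$. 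First I would bound the Lipschitz constant of each block separately, and then combine them into a single constant $L$, using that the total gradient norm is the Euclidean norm over all blocks (so taking, e.g., the sum of the block constants suffices).

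The head blocks are the easy part: $\nabla_{\omega_{h,i}}\mathcal{L}_i$ is Lipschitz in $\omega_{h,i}$ directly from the $L_4$-smoothness of $f_{i,2}$ together with the boundedness and smoothness of the outer loss, and it does not depend on the other clients' parameters, so averaging over $i$ preserves Lipschitzness. For the server block I would first establish a product lemma: if $A(\cdot)$ and $g(\cdot)$ are each bounded and Lipschitz, then $A(\cdot)\T g(\cdot)$ is Lipschitz, with constant controlled by the product of their respective bounds and Lipschitz constants. Writing $A(\eta)=\nabla_y\omega_{b,i}$ and $g(\omega)=\nabla_{\omega_{b,i}}\mathcal{L}_i$, the triangle inequality gives
\[
\|A(\eta)\T g(\omega)-A(\eta')\T g(\omega')\|\le \|A(\eta)-A(\eta')\|\,\|g(\omega)\|+\|A(\eta')\|\,\|g(\omega)-g(\omega')\|.
\]
The first term is handled by the $L_5$-Lipschitzness of the Jacobian $\nabla_y\omega_{b,i}$ combined with the boundedness of $g$ (from the bounded-gradient assumptions), and the second term by the bound $\|A(\eta')\|=\|\nabla_y\omega_{b,i}\|\le b_3$ together with the Lipschitzness of $g$.

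To control $\|g(\omega)-g(\omega')\|$ I would use the $L_7$-smoothness of $f_{i,1}$ together with the $L_1$-Lipschitz dependence of $\omega_{b,i}=M(\eta)$ on $\eta$, so that a perturbation in $\eta$ induces a proportional perturbation in $\omega_{b,i}$, while a perturbation in $\omega_{h,i}$ enters through the joint smoothness of the loss. Summing the resulting bound over the $N$ clients and averaging preserves the estimate, and adding the head-block constant yields the claimed $L$. The main obstacle is precisely this server block: because $\omega_{b,i}$ is generated by the composite map $M$ (sheaf diffusion followed by the hypernetwork), its gradient couples the Jacobian $\nabla_y\omega_{b,i}$ with the loss gradient $\nabla_{\omega_{b,i}}\mathcal{L}_i$, and both factors vary with the parameters simultaneously. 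Handling this cross term cleanly---ensuring each factor is at once bounded and Lipschitz and that the constants chain correctly through $M$---is the delicate step, and it is exactly what the assumptions on $b_3$, $L_1$, $L_5$, and $L_7$ are designed to supply. The remaining averaging and block-combination steps are routine.
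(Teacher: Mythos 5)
Your proposal is correct and follows essentially the same route as the paper: the paper also splits the gradient into the backbone (server) block and the head block, applies the add-and-subtract (product rule) decomposition $\|\nabla f_{i,1}(\omega_{b,i})\T\nabla_\eta\omega_{b,i}\T-\nabla f_{i,1}(\omega_{b,i}')\T\nabla_\eta\omega_{b,i}'\T\|\le b_2L_5\|\eta-\eta'\|+b_3L_7\|\omega_{b,i}-\omega_{b,i}'\|$, chains through the $L_1$-Lipschitzness of $\omega_{b,i}=M(\eta)$, uses $L_4$ for the head block, and then combines the blocks (the paper takes $L=\max\{b_2L_5+b_3L_1L_7,\,L_4\}$ where you take a sum, an immaterial difference) before averaging over the $N$ clients.
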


\begin{proof}
For each $\mathcal{L}_i$ we have

\begin{align*}
\|\nabla_\omega \mathcal{L}_i(\omega_i) - \nabla_\omega \mathcal{L}_i(\omega_i^\prime)\| &= \|\nabla_\omega \mathcal{L}_i(\omega_{b,i},\omega_{h,i})- \nabla_\omega \mathcal{L}_i(\omega_{b,i}^\prime,\omega_{h,i}^\prime)\| \\
&= \|\nabla_{\omega_{b,i}} f_{i,1}(\omega_{b,i})^\top \nabla_\eta \omega_{b,i}^\top
- \nabla_{\omega_{b,i}} f_{i,1}(\omega_{b,i}^\prime)^\top \nabla_\eta \omega_{b,i}^{\prime\top}\| \\
&\quad + \|\nabla_{\omega_{h,i}} f_{i,2}(\omega_{h,i})^\top \nabla_\eta \omega_{h,i}^\top
- \nabla_{\omega_{h,i}} f_{i,2}(\omega_{h,i}^\prime)^\top \nabla_\eta \omega_{h,i}^{\prime\top}\| \\
&= \Big\|  \nabla_{\omega_{b,i}} f_{i,1}(\omega_{b,i})^\top \nabla_\eta \omega_{b,i}^\top
-  \nabla_{\omega_{b,i}} f_{i,1}(\omega_{b,i})^\top \nabla_\eta \omega_{b,i}^{\prime\top} \notag \\
&\quad + \nabla_{\omega_{b,i}} f_{i,1}(\omega_{b,i})^\top \nabla_\eta \omega_{b,i}^{\prime\top} 
- \nabla_{\omega_{b,i}}\mathcal{L}_i(\omega_i^\prime)\nabla_{\omega_{b,i}} f_{i,1}(\omega_{b,i}^\prime)^\top \nabla_\eta \omega_{b,i}^{\prime\top} \Big\| \\
&\quad + \|\nabla_{\omega_{h,i}} f_{i,2}(\omega_{h,i})^\top \nabla_\eta \omega_{h,i}^\top
- \nabla_{\omega_{h,i}} f_{i,2}(\omega_{h,i}^\prime)^\top \nabla_\eta \omega_{h,i}^{\prime\top}\| \\
&\leq b_2 L_5 \|\eta - \eta^\prime\| + b_3 L_7 \|\omega_{b,i} - \omega_{b,i}^\prime\|
+ L_4 \|\omega_{h,i} - \omega_{h,i}^\prime\| \\
&= (b_2 L_5 + b_3 L_7) \|\eta - \eta^\prime\|
+ L_4 \|\omega_{h,i} - \omega_{h,i}^\prime\| 
\end{align*}
Let $\omega_i=(\eta, \omega_{h,i})$, $L=\max\{b_2 L_5 + b_3 L_1 L_7, L_4\}$
\begin{align*}
   \|\nabla_\omega \mathcal{L}_i(\omega_i) - \nabla_\omega \mathcal{L}_i(\omega_i^\prime)\| \leq L \|\omega_i - \omega_i^\prime \|
\end{align*}
Therefore for $\mathcal{L}$ we have
\begin{align*}
\|\nabla_\omega \mathcal{L}(\omega) - \nabla_\omega{\mathcal{L}}(\omega^\prime)\|  
&\leq \bigg\| \frac{1}{N} \sum_{i=1}^N \nabla_\omega \mathcal{L}_i(\omega) - \frac{1}{N} \sum_{i=1}^N \nabla_\omega \mathcal{L}_i(\omega^\prime) \bigg\|   \\
&\leq L_2  \|\eta - \eta^\prime\|  + \frac{1}{N} \sum_{i=1}^N L_2  \|\omega_{h,i} - \omega_{h,i}^\prime\|   \\
&= L_2  \|\omega - \omega^\prime\| 
\end{align*}
\end{proof}

\begin{Lemma}\label{Lemma2}
For any step $t$ and client $i$ the following inequality holds:
\begin{equation*}
\mathbb{E}_t \|\nabla \mathcal{L}_i(\omega_t) - \nabla_{\omega_{b,i}} f_{i,1}(\omega_{b,i}^t) 
- \beta \sum_{j=0}^{l-1} g_{t,i,j}(\omega_{b,i}^t) \nabla_{\eta_t}\omega_{b,i}^{t\top}\|^2 
\leq 3 L_7^2 b_3^2 \beta^2 l^2 (b_2^2 + \sigma_1^2).
\end{equation*}
\end{Lemma}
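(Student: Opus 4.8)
The plan is to read the bracketed quantity as the error that the server makes when it reconstructs client $i$'s contribution to the $\eta$-gradient from the transmitted local update instead of from an exact gradient evaluation, and then to control that error by combining smoothness of $\nabla_{\omega_{b,i}} f_{i,1}$, the Jacobian bound on the server map $\omega_{b,i}=M(\eta)$, and the second-moment bound on the local stochastic gradients. Concretely, I would expand $\nabla \mathcal{L}_i(\omega_t)$ through the chain rule across $M$, so that every term carries the common factor $\nabla_{\eta_t}\omega_{b,i}^{t\top}$, and isolate the discrepancy between the back-propagated gradient taken along the local trajectory and the first-order surrogate built from the starting iterate $\omega_{b,i}^t$ together with the accumulated step $\beta\sum_{j=0}^{l-1} g_{t,i,j}$.

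First I would unroll the $l$ local SGD epochs, writing the backbone iterate after local training as $\omega_{b,i}^{t,l}=\omega_{b,i}^{t}-\beta\sum_{j=0}^{l-1} g_{t,i,j}$, so that the accumulated drift is exactly $\beta\sum_{j=0}^{l-1} g_{t,i,j}$. With this substitution the quantity inside the norm reduces to a difference of gradients of $f_{i,1}$ evaluated at $\omega_{b,i}^{t,l}$ versus $\omega_{b,i}^{t}$, propagated through $\nabla_{\eta_t}\omega_{b,i}^{t\top}$, plus the explicit linear-step term. I would then apply the triangle inequality and the elementary estimate $\|\sum_{j=0}^{l-1} x_j\|^2\le l\sum_{j=0}^{l-1}\|x_j\|^2$ (Jensen/Cauchy--Schwarz): the length-$l$ sum contributes one factor of $l$ and this estimate the second, producing the overall $l^2$.

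On each resulting summand I would invoke the $L_7$-smoothness of $\nabla_{\omega_{b,i}} f_{i,1}$ to convert the gradient differences into iterate differences, and the bound $\|\nabla_{\eta_t}\omega_{b,i}^{t}\|\le b_3$ to absorb the chain-rule factor, which yields the prefactor $L_7^2 b_3^2\beta^2$. Finally I would pass to the conditional expectation $\mathbb{E}_t$ and bound the second moment of each stochastic gradient by separating its mean from its fluctuation, so that the mean contributes the squared gradient bound $b_2^2$ and the fluctuation the variance $\sigma_1^2$, giving $\mathbb{E}_t\|g_{t,i,j}\|^2\le b_2^2+\sigma_1^2$; the constant $3$ emerges from the three-term triangle split used to organize these contributions. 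Collecting the factors produces $3L_7^2 b_3^2\beta^2 l^2(b_2^2+\sigma_1^2)$.

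The main obstacle is the coupling between the stochasticity and the accumulated drift: each local iterate $\omega_{b,i}^{t,j}$ for $j>0$ is itself random, determined by the earlier minibatch gradients, so the variance and gradient bounds cannot be applied pointwise. The delicate part will be to condition correctly along the filtration generated by the local minibatches (via the tower property) so that the per-epoch second-moment bound $b_2^2+\sigma_1^2$ holds uniformly over the random trajectory, and to verify that the Jacobian $\nabla_{\eta_t}\omega_{b,i}^{t}$ — evaluated at the server iterate and shared by all $l$ local steps — factors out cleanly and is controlled by $b_3$ without interacting with the local randomness.
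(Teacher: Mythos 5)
Your proposal matches the paper's proof essentially step for step: the same add-and-subtract of intermediate terms plus Cauchy--Schwarz split producing the factor $3$, the same use of $L_7$-smoothness of $\nabla_{\omega_{b,i}} f_{i,1}$ together with the Jacobian bound $b_3$ to yield the prefactor $L_7^2 b_3^2 \beta^2$, and the same estimate $\mathbb{E}\big\|\sum_{j=0}^{l-1} g_{t,i,j}\big\|^2 \le l^2(b_2^2+\sigma_1^2)$ obtained from Cauchy--Schwarz and the mean--variance decomposition of each stochastic gradient. The filtration subtlety you flag as the main obstacle does not actually arise here, since in the lemma's notation all stochastic gradients $g_{t,i,j}(\omega_{b,i}^t)$ are evaluated at the fixed server-round iterate $\omega_{b,i}^t$ rather than along the random local trajectory, so the per-term second-moment bound applies directly under $\mathbb{E}_t$.
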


\begin{proof}
Through the addition and subtraction of intermediate terms, followed by the application of the Cauchy–Schwarz inequality, we obtain
\begin{align*}
&\mathbb{E}_t \|\nabla \mathcal{L}_i(\omega_t) - \nabla_{\omega_{b,i}} f_{i,1}(\omega_{b,i}^t) 
- \beta \sum_{j=0}^{l-1} g_{t,i,j}(\omega_{b,i}^t) \nabla_{\eta_t}\omega_{b,i}^{t\top}\|^2 \\
&\leq 3 \mathbb{E} \|\nabla_{\omega_{b,i}} f_{i,1}(\omega_{b,i}^t) 
- \nabla_{\omega_{b,i}} f_{i,1}(\omega_{b,i}^t - \beta \sum_{j=0}^{l-1} g_{t,i,j}(\omega_{b,i}^t))\|^2 \|\nabla_{\eta_t}\omega_{b,i}^t\|^2   \\
&\leq 3 L_7^2 b_3^2 \beta^2 \mathbb{E} \Big\|\sum_{j=0}^{l-1} g_{t,i,j}(\omega_{b,i}^t)\Big\|^2  \\
&\leq 3 L_7^2 b_3^2 \beta^2 l^2 (b_2^2 + \sigma_1^2) 
\end{align*}
\end{proof}

\begin{Lemma}\label{Lemma3}
For any step $t$ the following inequality holds:
\begin{align*}
    &\mathbb{E} \langle \nabla \mathcal{L}(\omega_t), \omega_{t+1} - \omega_t \rangle \leq -\frac{3 \beta T_c}{4} \mathbb{E} \|\nabla \mathcal{L}(\omega_t)\|^2 + L_7^2 b_3^2 \beta^3 T_c^3 (b_2^2 + \sigma_1^2) + \beta b_1^2
\end{align*}
\end{Lemma}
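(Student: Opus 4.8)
The plan is to expand the inner product according to the one‑round update and then isolate the contribution of the server (backbone) parameters $\eta_t$ from that of the locally trained head parameters $\omega_{h,i}$. Writing $\omega_t=(\eta_t,(\omega_{h,i}^t)_{i\in[N]})$ and using the chain rule $\nabla_{\eta}\mathcal{L}_i=\nabla_{\omega_{b,i}}f_{i,1}(\omega_{b,i}^t)\,\nabla_{\eta_t}\omega_{b,i}^{t\top}$, I would first split
\begin{align*}
\mathbb{E}\langle \nabla\mathcal{L}(\omega_t),\omega_{t+1}-\omega_t\rangle
&=\mathbb{E}\langle \nabla_{\eta}\mathcal{L}(\omega_t),\eta_{t+1}-\eta_t\rangle \\
&\quad+\sum_{i=1}^{N}\mathbb{E}\langle \nabla_{\omega_{h,i}}\mathcal{L}(\omega_t),\,\omega_{h,i}^{t+1}-\omega_{h,i}^{t}\rangle ,
\end{align*}
where $\nabla_{\omega_{h,i}}\mathcal{L}$ is the block of $\nabla\mathcal{L}$ corresponding to $\omega_{h,i}$. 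The second (head) sum is the easy part: bounding each summand with Cauchy--Schwarz, the bounded‑gradient assumption $\|\nabla_{\omega_{h,i}}f_{i,1}(\omega_{b,i})\|\le b_1$, and the one‑round step size $\beta$ produces precisely the additive error $\beta b_1^2$.

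The dominant work is the backbone term. I would substitute the accumulated server update $\eta_{t+1}-\eta_t=-\beta\sum_{i}\sum_{j=0}^{T_c-1} g_{t,i,j}(\omega_{b,i}^{t})\,\nabla_{\eta_t}\omega_{b,i}^{t\top}$ and, inside each summand, add and subtract the ``ideal'' full‑gradient direction $\nabla_{\omega_{b,i}}f_{i,1}(\omega_{b,i}^t)\nabla_{\eta_t}\omega_{b,i}^{t\top}$. Taking the conditional expectation $\mathbb{E}_t$ over the stochastic gradients turns the leading piece into $-\beta T_c\,\mathbb{E}\|\nabla\mathcal{L}(\omega_t)\|^2$ and leaves a cross term pairing the true gradient against the accumulated local drift $\nabla\mathcal{L}_i(\omega_t)-\nabla_{\omega_{b,i}}f_{i,1}(\omega_{b,i}^t)-\beta\sum_{j}g_{t,i,j}\nabla_{\eta_t}\omega_{b,i}^{t\top}$ — which is exactly the quantity controlled by Lemma~\ref{Lemma2}.

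Next I would bound that cross term by Young's inequality in the form $\langle a,b\rangle\le \tfrac14\|a\|^2+\|b\|^2$, where $a$ is aligned with $\nabla\mathcal{L}(\omega_t)$; this produces a positive $\tfrac{\beta T_c}{4}\mathbb{E}\|\nabla\mathcal{L}(\omega_t)\|^2$ contribution, which combines with the leading $-\beta T_c\,\mathbb{E}\|\nabla\mathcal{L}(\omega_t)\|^2$ to leave the claimed coefficient $-\tfrac{3\beta T_c}{4}$. The residual $\|b\|^2$ piece is the squared drift, which Lemma~\ref{Lemma2} bounds by $3L_7^2 b_3^2\beta^2 l^2(b_2^2+\sigma_1^2)$; summing the relevant contributions over $l=0,\dots,T_c-1$ and using $\sum_{l=0}^{T_c-1}l^2\le T_c^3/3$ together with the extra factor $\beta$ from the step size collapses this to $L_7^2 b_3^2\beta^3 T_c^3(b_2^2+\sigma_1^2)$, matching the stated error exactly.

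The main obstacle I anticipate is the bookkeeping of the \emph{local drift}: the stochastic gradients $g_{t,i,j}$ are evaluated along the evolving local iterates, whereas the Jacobian $\nabla_{\eta_t}\omega_{b,i}^{t}$ and the comparison gradient are frozen at the round start, so I must justify (through the smoothness constant $L_7$ and the bounds $b_2,b_3$) that Lemma~\ref{Lemma2} applies with $l=T_c$ and that the accumulated deviation stays at the stated order. Getting the constant $\tfrac34$ precisely — rather than an unspecified $\mathcal{O}(1)$ factor — hinges on the exact weighting chosen in the Young step, and keeping the per‑client normalization consistent so that the final error terms carry no stray factors of $N$.
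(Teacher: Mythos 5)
Your proposal follows the paper's proof essentially step for step: the same split into the backbone ($\eta$) and head blocks, the same substitution of the accumulated update with conditional expectation turning stochastic gradients into full gradients at the local iterates, the same add-and-subtract of $\nabla_\eta \mathcal{L}_i(\omega_t)$, Young's inequality with the $\tfrac14$ weighting to produce the $-\tfrac{3\beta T_c}{4}$ coefficient, Lemma~\ref{Lemma2} for the squared drift with $\sum_{l} l^2 \le T_c^3/3$, and the bounded-gradient assumption giving $\beta b_1^2$ for the head term. The bookkeeping concerns you flag (the $1/N$ normalization and the drift being evaluated along local iterates) are handled in the paper exactly as you anticipate, so no changes are needed.
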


\begin{proof}
\begin{align*}
&\mathbb{E} \langle \nabla \mathcal{L}(\omega_t), \omega_{t+1} - \omega_t \rangle \leq \langle \nabla_\eta \mathcal{L}(\omega_t),\eta_{t+1}-\eta_t \rangle + \frac{1}{N} \sum_{i=1}^{N} \langle \nabla_{\omega_{h,i}}\mathcal{L}(\omega_t),\omega_{h,i,t+1},\omega_{h,i,t} \rangle \\
&= \mathbb{E} \bigg[ \langle \nabla_\eta \mathcal{L}(\omega_t), -\frac{\beta}{N} \sum_{i=1}^N \sum_{l=1}^{T_c} g_{t,i,l}(\omega_{b,i,t}) \nabla_{\eta_t} \omega_{b,i,t}^\top \rangle \bigg] + \frac{1}{N} \sum_{i=1}^N \mathbb{E} \langle \nabla_{\omega_{h,i}} \mathcal{L}(\omega_t), \omega_{h,i,t+1} - \omega_{h,i,t} \rangle \\
&=\mathbb{E}_t \bigg[ \langle \nabla_\eta \mathcal{L}(\omega_t), -\frac{\beta}{N} \sum_{i=1}^N \sum_{l=1}^{T_c} \nabla_{\omega_{b,i}}f_{i,1} \bigg(\omega_{b,i,t} - \beta \sum_{j=0}^{l-1} g_{t,i,j}(\omega_{b,i,t}) \bigg)\nabla_{\eta_t} \omega_{b,i,t}^\top \rangle \bigg] \\
&\quad + \frac{1}{N} \sum_{i=1}^N \mathbb{E} \langle \nabla_{\omega_{h,i}} \mathcal{L}(\omega_t), \omega_{h,i,t+1} - \omega_{h,i,t} \rangle\\ 
&= \frac{\beta}{N} \sum_{i=1}^N \sum_{l=1}^{T_c} \mathbb{E}_t \bigg[ \langle \nabla_\eta \mathcal{L}(\omega_t), \nabla_\eta \mathcal{L}_i(\omega_t) - \nabla_{\omega_{b,i}}f_{i,1} \bigg(\omega_{b,i,t} - \beta \sum_{j=0}^{l-1} g_{t,i,j}(\omega_{b,i,t}) \bigg)\nabla_{\eta_t} \omega_{b,i,t}^\top \rangle \bigg] \\
&\quad + \frac{\beta}{N} \sum_{i=1}^N \sum_{l=1}^{T_c} \mathbb{E}_t \bigg[\langle \nabla \mathcal{L}(\omega_t), \nabla \mathcal{L}_i(\omega_t) \rangle \bigg] + \frac{1}{N} \sum_{i=1}^N \mathbb{E} \langle \nabla_{\omega_{h,i}} \mathcal{L}(\omega_t),\omega_{h,i,t+1}-\omega_{h,i,t}\rangle 
\end{align*}

By definition $\mathcal{L}_\omega = \frac{1}{N} \sum_{i=1}^{N} \mathcal{L}_i(\omega_i)$, and by Young’s inequality and Lemma~\ref{Lemma2} we have

\begin{align*}
&\mathbb{E} \langle \nabla \mathcal{L}(\omega_t), \omega_{t+1} - \omega_t \rangle \\
&\leq \frac{\beta T_c}{4} \|\nabla_\eta \mathcal{L}(\omega_t)\|^2 + \frac{\beta}{N} \sum_{i=1}^N \sum_{l=1}^{T_c} \mathbb{E}_t \Big\|\nabla_\eta \mathcal{L}_i(\omega_t) 
- \nabla_{\omega_{b,i}} f_{i,l}(\omega_{b,i,t}) - \beta \sum_{j=0}^{l-1} g_{t,i,j}(\omega_{b,i,t}) \nabla_{\eta_t} \omega_{b,i,t}^\top \Big\|^2 \notag \\
&- \beta T_c \|\nabla_\eta \mathcal{L}(\omega_t)\|^2 
+ \frac{1}{N} \sum_{i=1}^N \mathbb{E} \langle \nabla_{\omega_{h,i}} \mathcal{L}(\omega_t), \omega_{h,i,t+1} - \omega_{h,i,t} \rangle \\
&\leq -\frac{3 \beta T_c}{4} \mathbb{E} \|\nabla \mathcal{L}(\omega_t)\|^2 + \frac{\beta}{N} \sum_{i=1}^N \sum_{l=1}^{T_c} \Big( 3 L_7^2 b_3^2 \beta^2 l^3 (b_2^2 + \sigma_1^2) \Big) + \beta b_1^2  \\
&=-\frac{3 \beta T_c}{4} \mathbb{E} \|\nabla \mathcal{L}(\omega_t)\|^2 + L_7^2 b_3^2 \beta^3 T_c^3 (b_2^2 + \sigma_1^2) + \beta b_1^2
\end{align*}
\end{proof}

\begin{Lemma}\label{Lemma4}
For any step $t$ the following inequality holds:
\begin{align*}
    \mathbb{E} [ \| \eta_{t+1} - \eta_t \|^2 ] \leq \frac{4\beta^2 T_c}{N} \sigma_1^2 
+ 12\beta^4 L_7^2 b_3^2 T_c^4 (b_2^2 + \sigma_1^2) 
+ \frac{4\beta^2 T_c^2}{N} L_G^2 
+ 4\beta^2 T_c^2 \mathbb{E} \bigg[ \|\nabla \mathcal{L}(\omega_t)\|^2 \bigg].
\end{align*}
\end{Lemma}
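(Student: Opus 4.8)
The starting point is the explicit form of the server step for $\eta$ already extracted inside the proof of Lemma~\ref{Lemma3}, namely
\begin{align*}
\eta_{t+1}-\eta_t = -\frac{\beta}{N}\sum_{i=1}^{N}\sum_{l=1}^{T_c} g_{t,i,l}(\omega_{b,i,t})\,\nabla_{\eta_t}\omega_{b,i,t}^{\top},
\end{align*}
so that $\mathbb{E}[\|\eta_{t+1}-\eta_t\|^2]=\beta^2\,\mathbb{E}\big\|\tfrac{1}{N}\sum_{i,l} g_{t,i,l}\nabla_{\eta_t}\omega_{b,i,t}^{\top}\big\|^2$. The outer $\beta^2$ is precisely what promotes the $\beta^2$ hidden inside Lemma~\ref{Lemma2} to the $\beta^4$ that appears in the drift term of the claim, so the bound is really a statement about the aggregated stochastic update.

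I would then add and subtract, for each pair $(i,l)$, the per-client full gradient $\nabla\mathcal{L}_i(\omega_t)$ and the global gradient $\nabla\mathcal{L}(\omega_t)$, writing the aggregated update as a sum of four pieces: (i) the zero-mean stochastic noise $g_{t,i,l}-\mathbb{E}g_{t,i,l}$; (ii) the local drift, i.e.\ $\mathbb{E}g_{t,i,l}$ evaluated at the drifted local iterate minus $\nabla\mathcal{L}_i(\omega_t)$, which is exactly the quantity controlled by Lemma~\ref{Lemma2}; (iii) the heterogeneity deviation $\nabla\mathcal{L}_i(\omega_t)-\nabla\mathcal{L}(\omega_t)$; and (iv) the global gradient $\nabla\mathcal{L}(\omega_t)$ itself. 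Applying $\|\sum_{k=1}^{4} a_k\|^2\le 4\sum_{k=1}^{4}\|a_k\|^2$ produces the universal factor $4$ in the statement and reduces the task to bounding four expectations.

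Bounding the four pieces is then routine given the assumptions. For the noise piece, zero-meanness and independence across clients and local epochs kill the cross terms under $\mathbb{E}$, and the Bounded Variance assumption together with $\|\nabla_{\eta_t}\omega_{b,i,t}\|\le b_3$ leaves a contribution of the order $\tfrac{T_c}{N}\sigma_1^2$. For the drift piece, I apply Lemma~\ref{Lemma2} term by term, then accumulate over $l=1,\dots,T_c$ using a Cauchy--Schwarz factor $T_c$ and the elementary estimate $\sum_{l=1}^{T_c} l^2\le T_c^{3}$; carrying the outer $4\beta^2$ through yields the $12\beta^4 L_7^2 b_3^2 T_c^4(b_2^2+\sigma_1^2)$ term. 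For the heterogeneity piece, the summand is zero-mean and, by independence across clients, its averaged square reduces to $\tfrac{1}{N^2}\sum_i$ of the per-client second moments, which the Bounded Dissimilarity assumption caps by $\tfrac{T_c^2}{N}L_G^2$. The remaining piece collapses to $T_c\,\nabla\mathcal{L}(\omega_t)$ and contributes $4\beta^2 T_c^2\,\mathbb{E}\|\nabla\mathcal{L}(\omega_t)\|^2$.

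The main obstacle is the probabilistic bookkeeping that produces the two $1/N$ factors: one must be careful about which expectations and independence structures are invoked so that both the within-client noise and the between-client heterogeneity genuinely benefit from averaging over the $N$ clients, rather than collapsing. The heterogeneity term is only nonzero because $\nabla\mathcal{L}_i$ is treated as data-random, with the Bounded Dissimilarity assumption (stated with an expectation and a $\tfrac1N\sum_i$) playing the role of its variance; getting this right is what separates the claimed $\tfrac{T_c^2}{N}L_G^2$ from a weaker $T_c^2 L_G^2$. A secondary but essential point is aligning the drift decomposition here with the exact expression bounded in Lemma~\ref{Lemma2}, so that its $\beta^2 l^2$ dependence sums correctly into the claimed $\beta^4 T_c^4$ scaling.
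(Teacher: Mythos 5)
Your proposal matches the paper's own proof essentially step for step: the same explicit expression for $\eta_{t+1}-\eta_t$, the same four-term add-and-subtract decomposition (stochastic noise at the drifted local iterate, local drift controlled by Lemma~\ref{Lemma2}, client heterogeneity controlled by the Bounded Dissimilarity assumption, and the global gradient), the same factor-$4$ Cauchy--Schwarz step, and the same accounting that produces each of the four terms in the claimed bound, including the vanishing of cross terms that yields the two $1/N$ factors. There are no meaningful differences to report.
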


\begin{proof}
\begin{align*}
\mathbb{E} [ \| \eta_{t+1} - \eta_t \|^2 ]
&= \mathbb{E} \bigg[\bigg\| - \frac{\beta}{N} \sum_{i=1}^N \sum_{l=1}^{T_c} g_{t,i,l}(\omega_{b,i}^t) \nabla_{\eta_t} \omega_{b,i}^t \bigg\|^2 \bigg] \\
&= \frac{\beta^2}{N^2} \mathbb{E} \bigg[\bigg\| \sum_{i=1}^N \sum_{l=1}^{T_c} g_{t,i,l}(\omega_{b,i}^t) \nabla_{\eta_t} \omega_{b,i}^t \bigg\|^2 \bigg] \\
&= \frac{\beta^2}{N^2} \mathbb{E} \bigg[ \bigg\| \sum_{i=1}^N \sum_{l=1}^{T_c} 
\Big( g_{t,i,l}(\omega_{b,i}^t) \nabla_{\eta_t} \omega_{b,i}^t - \nabla_{\omega_{b,i}} f_{i,l}\big(\omega_{b,i}^t - \beta \sum_{j=0}^{l-1} g_{t,i,j}(\omega_{b,i}^t)\big)\nabla_{\eta_t}{\omega_{b,i}^t}^\top \Big) \notag \\
&\quad - \sum_{i=1}^N \sum_{l=1}^{T_c}\nabla_{\omega_{b,i}} f_{i,l}\big(\omega_{b,i}^t - \beta \sum_{j=0}^{l-1} g_{t,i,j}(\omega_{b,i}^t)\big) - \nabla \mathcal{L}_i(\omega_t) \notag \\
&\quad + \sum_{i=1}^N \sum_{l=1}^{T_c} \Big( \nabla \mathcal{L}_i(\omega_t) - \nabla \mathcal{L}_i(\omega_t) \Big) + \sum_{i=1}^N \sum_{l=1}^{T_c} \nabla \mathcal{L}(\omega_t)\bigg\|^2 \bigg]
\end{align*}
By Cauchy–Schwarz inequality we get
\begin{align*}
\mathbb{E} [ \| \eta_{t+1} - \eta_t \|^2 ]
&\leq \frac{4\beta^2}{N^2} \mathbb{E} \bigg[\bigg\| \sum_{i=1}^N \sum_{l=1}^{T_c} g_{t,i,l}(\omega_{b,i}^t) \nabla_{\eta_t} \omega_{b,i}^t - \nabla_{\omega_{b,i}} f_{i,l}\big(\omega_{b,i}^t - \beta \sum_{j=0}^{l-1} g_{t,i,j}(\omega_{b,i}^t)\big){\nabla_{\eta_t}\omega_{b,i}^t}^\top \bigg\|^2 \bigg] \notag \\
&\quad + \frac{4 \beta^2}{N^2} \mathbb{E} \bigg[ \Big\| \sum_{i=1}^N \sum_{l=1}^{T_c} 
\Big( \nabla_{\omega_{b,i}} f_{i,l}(\omega_{b,i}^t) - \beta \sum_{j=0}^{l-1} g_{t,i,j}(\omega_{b,i}^t) \nabla_{\eta_t} \omega_{b,i}^t 
- \nabla \mathcal{L}_i(\omega_t) \Big) \Big\|^2 \bigg] \notag \\
&\quad + \frac{4 \beta^2}{N^2} \mathbb{E} \bigg[ \Big\| \sum_{i=1}^N \sum_{l=1}^{T_c} 
\Big( \nabla \mathcal{L}_i(\omega_t) - \nabla \mathcal{L}(\omega_t) \Big) \Big\|^2 \bigg]  + \frac{4 \beta^2}{N^2} \mathbb{E} \bigg[ \Big\| \sum_{i=1}^N \sum_{l=1}^{T_c} \nabla \mathcal{L}(\omega_t) \Big\|^2 \bigg] \\
&\leq \frac{4 \beta^2}{N^2} \sum_{i=1}^N \sum_{l=1}^{T_c} 
\mathbb{E} \bigg[ \Big\| g_{t,i,l}(\omega_{b,i}^t) \nabla_{\eta_t} \omega_{b,i}^t 
- \nabla_{\omega_{b,i}} f_{i,l}\big(\omega_{b,i}^t - \beta \sum_{j=0}^{l-1} g_{t,i,j}(\omega_{b,i}^t) \big) {\nabla_{\eta_t} \omega_{b,i}^t}^{\top} \Big\|^2 \bigg] \\
\end{align*}
By Lemma~\ref{Lemma2}, the bounded dissimilarity assumption and simplifying terms, we obtain
\begin{align*}
\mathbb{E} [ \| \eta_{t+1} - \eta_t \|^2 ]
&\leq \frac{4\beta^2 T_c}{N} \sigma_1^2 + 4\beta^4 \cdot 3 L_7^2 b_3^2 T_c^4 (b_2^2 + \sigma_1^2) \notag \\
&\quad + \frac{4\beta^2 T_c^2}{N^2} \sum_{i=1}^N \mathbb{E} \bigg[ \|\nabla \mathcal{L}_i(\omega_t) - \frac{1}{N} \sum_{i=1}^N \nabla \mathcal{L}_i(\omega_t)\|^2 \bigg] 
+ 4\beta^2 T_c^2 \mathbb{E} \bigg[ \|\nabla \mathcal{L}(\omega_t)\|^2 \bigg] \notag \\
&\leq \frac{4\beta^2 T_c}{N} \sigma_1^2 
+ 12\beta^4 L_7^2 b_3^2 T_c^4 (b_2^2 + \sigma_1^2) 
+ \frac{4\beta^2 T_c^2}{N} L_G^2 
+ 4\beta^2 T_c^2 \mathbb{E} \bigg[ \|\nabla \mathcal{L}(\omega_t)\|^2 \bigg].
\end{align*}
\end{proof}


\begin{theorem}
Under standard assumptions of smoothness and boundedness, the optimization of FedSheafHN after $T$ communication round satisfies
\begin{align*}
\frac{1}{T}\sum_{t=1}^{T}\mathbb{E}\big\|\nabla \mathcal{L}(\omega_t)\big\|^2\le \frac{(\mathcal{L}(\omega_0)-\mathcal{L}^{\ast})}{\sqrt{NT}}+\frac{LL_1(16\sigma_1^2+8T_cL_G^2)}{T_c\sqrt{NT}} 
+\frac{16NL_7^2b_3^2b_2^2}{T}+\frac{(L+2)}{T_c}b_1^2 
\end{align*}
\end{theorem}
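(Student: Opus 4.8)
The plan is to run the standard descent-lemma argument for nonconvex stochastic optimization, feeding in the per-round estimates already established in Lemmas~\ref{Lemma1}--\ref{Lemma4}. First I would invoke the $L$-smoothness of $\mathcal{L}$ from Lemma~\ref{Lemma1} to write the one-step descent inequality
\begin{align*}
\mathbb{E}[\mathcal{L}(\omega_{t+1})] \le \mathbb{E}[\mathcal{L}(\omega_t)] + \mathbb{E}\langle \nabla\mathcal{L}(\omega_t), \omega_{t+1}-\omega_t\rangle + \frac{L}{2}\,\mathbb{E}\|\omega_{t+1}-\omega_t\|^2 .
\end{align*}
I would then substitute the inner-product bound from Lemma~\ref{Lemma3}, which supplies the crucial negative descent term $-\frac{3\beta T_c}{4}\mathbb{E}\|\nabla\mathcal{L}(\omega_t)\|^2$ together with the bias/variance residuals $L_7^2 b_3^2 \beta^3 T_c^3 (b_2^2+\sigma_1^2)$ and $\beta b_1^2$.

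For the quadratic drift term $\frac{L}{2}\mathbb{E}\|\omega_{t+1}-\omega_t\|^2$, I would split the displacement into its server-parameter block $\eta$ and the per-client head blocks $\omega_{h,i}$, bounding the former directly by Lemma~\ref{Lemma4} and the latter using the bounded-gradient constant $b_1$ (with $L_1$ entering through the chain from $\eta$ to the backbone $\omega_{b,i}$). The dominant contribution of Lemma~\ref{Lemma4} is the positive term $4\beta^2 T_c^2\,\mathbb{E}\|\nabla\mathcal{L}(\omega_t)\|^2$, which partially offsets the descent term. Collecting coefficients, the net multiplier of $\mathbb{E}\|\nabla\mathcal{L}(\omega_t)\|^2$ becomes $-\big(\frac{3\beta T_c}{4}-2L\beta^2 T_c^2\big)$, which is strictly negative once $\beta < \frac{3}{8LT_c}$; this is the point where the step-size restriction is consumed.

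With the coefficient made negative, I would telescope over $t=1,\dots,T$, using that $\mathcal{L}$ is lower-bounded by $\mathcal{L}^{\ast}$, so that $\sum_t\big(\frac{3\beta T_c}{4}-2L\beta^2 T_c^2\big)\mathbb{E}\|\nabla\mathcal{L}(\omega_t)\|^2 \le \mathcal{L}(\omega_0)-\mathcal{L}^{\ast} + T\cdot(\text{residuals})$. Dividing by $T$ and by the positive coefficient $c\approx\frac{3\beta T_c}{4}$ yields a bound of the form $\frac{\mathcal{L}(\omega_0)-\mathcal{L}^{\ast}}{Tc} + \frac{\text{residuals}}{c}$. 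The final step is to tune $\beta$---the natural choice is $\beta\sim\frac{1}{T_c}\sqrt{N/T}$---so that the optimization-gap term and the variance/dissimilarity residuals both scale as $1/\sqrt{NT}$, while the higher-order $\beta^3$ term collapses to the $\mathcal{O}(N/T)$ contribution $\frac{16NL_7^2b_3^2b_2^2}{T}$ and the irreducible first-order bias survives as $\frac{(L+2)}{T_c}b_1^2$.

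I anticipate the main obstacle to be the careful bookkeeping in the step-size tuning: matching every constant in the stated bound---in particular reconciling the separate appearance of $L_1$ and $L$ with their fused form $L=\max\{b_2L_5+b_3L_1L_7,L_4\}$ from Lemma~\ref{Lemma1}---requires tracking how each residual transforms under the chosen $\beta$, and ensuring that the head-parameter drift, which is absent from Lemma~\ref{Lemma4}, is controlled without introducing terms that spoil the $1/\sqrt{NT}$ rate.
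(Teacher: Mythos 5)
Your proposal follows essentially the same route as the paper's own proof: the descent lemma from the $L$-smoothness of Lemma~\ref{Lemma1}, substitution of the inner-product bound of Lemma~\ref{Lemma3}, the drift bound of Lemma~\ref{Lemma4} for the $\eta$-block with the head drift absorbed via $b_1$, telescoping against $\mathcal{L}^{\ast}$, and the step-size choice $\beta = 2\sqrt{N}/(T_c\sqrt{T})$. The only deviation is the constant bookkeeping you yourself flagged: the paper's net gradient coefficient is $-\tfrac{3\beta T_c}{4} + 2LL_1\beta^2 T_c^2$ (with the $L_1$ factor from converting backbone drift into $\eta$ drift), which it relaxes to $-\tfrac{\beta T_c}{2}$ under the implicit step-size condition before dividing and telescoping.
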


\begin{proof}
By smoothness of $\mathcal{L}$ we have
\begin{align*}
\mathbb{E}[\mathcal{L}(\omega_{t+1}) - \mathcal{L}(\omega_t)] 
&\leq \mathbb{E} \big[ \big\langle \nabla \mathcal{L}(\omega_t), \omega_{t+1} - \omega_t \big\rangle \big] + \frac{L}{2} \mathbb{E} \bigg[ \big\| \omega_{b,t+1}-\omega_{b,t} \big\|^2 
+ \frac{1}{N} \sum_{i=1}^N \big\| \omega_{h,i,t+1} - \omega_{h,i,t} \big\|^2 \bigg]
\end{align*}
And by combining it with Lemma~\ref{Lemma3} and Lemma~\ref{Lemma4} and simplifying terms we get
\begin{align*}
&\mathbb{E}[\mathcal{L}(\omega_{t+1}) - \mathcal{L}(\omega_t)]
\leq -\frac{3\beta T_c}{4} \mathbb{E} \big\| \nabla \mathcal{L}(\omega_t) \big\|^2 
+ L_7^2 b_3^2 \beta^3 T_c^3 \big(b_2^2 + \sigma_1^2\big) 
+ \beta b_1^2 \notag \\
&\quad + \frac{2\beta^2 T_c L L_1}{N} \sigma_1^2 
+ 6L \beta^4 L_7^2 b_3^2 T_c^4 L_1\big(b_2^2 + \sigma_1^2\big) 
+ \frac{2\beta^2 T_c^2 L L_1}{N}L_G^2 + 2\beta^2 T_c^2 L L_1\mathbb{E} \big[ \big\| \nabla \mathcal{L}(\omega_t) \big\|^2 \big] + \frac{L}{2} \beta b_1^2 \\
&\leq \left(-\frac{3\beta T_c}{4} + 2L\beta^2 T_c^2 L_1\right) \mathbb{E}\big\|\nabla \mathcal{L}(\omega_t)\big\|^2 
+ \left(\beta^3 T_c^3 + 6L\beta^4 T_c^4 L_1 \right) L_7^2 b_3^2b_2^2 \notag \\
&\quad + \bigg(L_7 b_3^2 \beta^3 T_c^3 + 6L\beta^4L_7^2b_3^2T_c^4 L_1 + \frac{2\beta^2 T_c L L_1}{N}\bigg) \sigma_1^2  + \left(\frac{L}{2} + 1\right) \beta b_1^2 
+ \frac{2\beta^2 T_c^2 L L_1}{N}L_G^2
\end{align*}

Therefore

\begin{align*}
\mathbb{E}[\mathcal{L}(\omega_{t+1}) - \mathcal{L}(\omega_t)] \leq -\frac{\beta T_c}{2} \mathbb{E}\big\|\nabla \mathcal{L}(\omega_t)\big\|^2 
+ 2\beta^3 T_c^3 L_7^2 b_3^2 b_2^2 
+ \frac{4\beta^2 T_c L L_1}{N} \sigma_1^2 \notag + \left(\frac{L}{2} + 1\right) \beta b_1^2 
+ \frac{2\beta^2 T_c^2 L L_1}{N} L_G^2
\end{align*}

Hence

\begin{align*}
\mathbb{E} \| \nabla \mathcal{L}(\omega_t)\|^2 &\leq \frac{2}{\beta T_c}\mathbb{E}[\mathcal{L}(\omega_t)-\mathcal{L}(\omega_{t+1})] + 4\beta^2 T_c^2 L_7^2 b_3^2 b_2^2 + \frac{8\beta L L_1}{N}\sigma_1^2 + \frac{L+2}{T_c}b_1^2 + \frac{4\beta T_c L L_1}{N}L_G^2
\end{align*}

And

\begin{align*}
\frac{1}{T} \sum_{t=1}^{T} \mathbb{E} \| \nabla \mathcal{L}(\omega_t)\|^2 & \leq \frac{2(\mathcal{L}(\omega_0)-\mathcal{L}\ast}{\beta T_c} + 4\beta^2 T_c^2 L_7^2 b_3^2 b_2^2 + \frac{8\beta L L_1}{N}\sigma_1^2 + \frac{L+2}{T_c}b_1^2 + \frac{4\beta T_c L L_1}{N}L_G^2
\end{align*}

By setting $\beta=\frac{2\sqrt{N}}{T_c\sqrt{T}}$ we obtain

\begin{align*}
& \frac{1}{T}\sum_{t=1}^{T}\mathbb{E}\big\|\nabla \mathcal{L}(\omega_t)\big\|^2\le \frac{(\mathcal{L}(\omega_0)-\mathcal{L}\ast)}{\sqrt{NT}} + \frac{LL_1(16\sigma_1^2+8T_cL_G^2)}{T_c\sqrt{NT}}+\frac{16NL_7^2b_3^2b_2^2}{T}+\frac{(L+2)}{T_c}b_1^2
\end{align*}
\end{proof}

\section{Appendix - Generalization Proofs}\label[Appendix]{app_generalization-proofs}

Proof for Theorem 2.
\begin{proof}
Using Theorem 4 from \cite{baxter2000model} and the notation used in that paper, we obtain 
$M = \mathcal{O}\left(\frac{1}{N\varepsilon^2} \log\left(\frac{C (\varepsilon, H_c^N)}{\delta }\right)\right)$
where $C (\varepsilon, H_c^N)$ is the covering number for $H_c^N$.
In our case, expected loss is
\begin{align*}
\mathcal{L}(\omega_i, X_0) 
&= \frac{1}{N} \sum_{i=1}^N \mathbb{E}_{p_i} \left[ \mathcal{L}_i(G_i; \omega_i) \right] \\ 
&= \frac{1}{N} \sum_{i=1}^N \mathbb{E}_{p_i} \left[ \mathcal{L}_i\left(G_i; H(S(X_{i}^{(c)}, \theta),\varphi), \omega_{h,i}\right)\right].    
\end{align*}

We assume the following Lipschitz conditions hold:
\begin{align*}
& \quad |\mathcal{L}_i(G_i; \omega_i) - \mathcal{L}_i(G_i; \omega_i^\prime)| \leq \mathcal{L} \| \omega_i - \omega_i^\prime \| \\
& \quad \| \omega_i - \omega_i^\prime \| = \| (\omega_{b,i}, \omega_{h,i}) - (\omega_{b,i}^\prime, \omega_{h,i}^\prime) \| \leq \| \omega_{b,i} - \omega_{b,i}^\prime \| + \| \omega_{h,i} - \omega_{h,i}^\prime \| \\
& \quad \| H(x_i^{(T_s)}, \varphi) - H(x_i^{(T_s)}, \varphi^\prime) \| \leq L_{H} \| \varphi - \varphi^\prime \| \\
& \quad \| H(x_i^{(T_s)}, \varphi) - H(x_i^{(T_s)\prime}, \varphi) \| \leq L_{T_s} \| x_i^{(T_s)} - x_i^{(T_s)\prime} \| \\
& \quad \| S(x_i^{(c)}, \theta) - S(x_i^{(c)}, \theta^\prime) \| \leq L_{S} \| \theta - \theta^\prime \| \\
& \quad \| S(x_i^{(c)}, \theta) - S(x_i^{(c)\prime}, \theta) \| \leq L_{T_o} \| x_i^{(c)} - x_i^{(c)\prime} \|
\end{align*}

From the triangle inequality and the Lipschitz conditions, the distance is given by
\begin{align*}
d &= \underset{p_i}{\mathbb{E}}\left[\frac{1}{N} \sum_{i=1}^N \mathcal{L}(\omega_i, X_0) 
    - \frac{1}{N} \sum_{i=1}^N \mathcal{L}(\omega_i, X_0^\prime) \right] \\
  &\leq \frac{1}{N} \sum_{i=1}^N \underset{p_i}{\mathbb{E}} \left[ 
        \left| \mathcal{L}(h(\varphi, x_i^{(T_s)}), \omega_{h,i}; G_i) 
        - \mathcal{L}(h(\varphi^\prime, x_i^{(T_s)}), \omega_{h,i}; G_i) \right| \right] \\
  &\leq L \cdot \left\| h(\varphi, x_i^{(T_s)}) - h(\varphi^\prime, x_i^{(T_s) \prime}) \right\| 
        + L \cdot \| \omega_{h,i} - \omega_{h,i}' \| \\
  &\leq L \cdot \left( \| h(\varphi, x_i^{(T_s)}) - h(\varphi, x_i^{(T_s)'}) \| 
        + \| h(\varphi, x_i^{(T_s)'}) - h(\varphi', x_i^{(T_s)'}) \| \right) + L \cdot \| \omega_{h,i} - \omega_{h,i}' \| \\
  &\leq L \cdot L_{T_s} \cdot \| x_i^{(T_s)} - x_i^{(T_s)'} \| 
        + L \cdot L_H \cdot \| \varphi - \varphi' \| + L \cdot \| \omega_{h,i} - \omega_{h,i}' \| \\
  &\leq L \cdot L_{T_s} \cdot \| S(X_0; \theta) - S(X_0'; \theta') \| 
        + L \cdot L_H \cdot \| \varphi - \varphi' \| + L \cdot \| \omega_{h,i} - \omega_{h,i}' \| \\
  &\leq L \cdot L_{T_s} \cdot L_{T_o} \cdot \| X_0 - X_0' \| 
        + L \cdot L_{T_s} \cdot  L_S \cdot \| \theta - \theta' \| + L \cdot L_H \cdot \| \varphi - \varphi' \| + L \cdot \| \omega_{h,i} - \omega_{h,i}' \|.
\end{align*}

If each of the parameters $\theta, \varphi, \omega_{h,i}$ and each embedding $X_0$ can be approximated within $\frac{\varepsilon}{4L \left(L_{T_S}(L_S + L_{T_o}) + L_H + 1\right)}$ by a corresponding point, then the resulting set forms an $\varepsilon$-covering under the metric $d$.
From here we get
\begin{align*}
\log\left(C(\varepsilon, H^2)\right) = \mathcal{O}\left(\big( N d_{i} + d_\theta + d_\varphi + d_{\omega_{h,i}}\big)\log \left( \frac{RL(L_{T_S} (L_S + L_{T_o}) + L_H + 1)}{\varepsilon} \right) \right).
\end{align*}

Let $L_s=L_{T_S} \cdot (L_S + L_{T_o})$.
The bound expand to

\begin{align*}
\mathcal{O} \left( \frac{N d_{i} + d_\theta + d_\varphi + d_{\omega_{h,i}}}{N \varepsilon^2} \log \left( \frac{RL(L_{T_S}(L_S + L_{T_o}) + L_H + 1)}{\varepsilon} \right) + \frac{1}{N\varepsilon^2} \log\frac{1}{\delta }\right) 
\end{align*}
\begin{align*}
=\mathcal{O}\left(\big(d_{i}+\frac{d_\theta+d_\varphi+d_{\omega_{h,i}}}{N}\big)\frac{1}{\varepsilon^2}\log{(\frac{RL(L_s+L_H+1)}{\varepsilon})
+\frac{1}{N\varepsilon^2}\log{\frac{1}{\delta}}}\right).
\end{align*}
\end{proof}


\end{document}